\newcommand{\kibitz}[2]{\ifnum\Comments=1\textcolor{#1}{#2}\fi}
\renewcommand{\hat}{\widehat}
\title{Split Localized Conformal Prediction}
\author{
	Xing Han \qquad Ziyang Tang \qquad Joydeep Ghosh \qquad Qiang Liu \\
    University of Texas at Austin\\
	\texttt{\{aaronhan223, ztang, jghosh, lqiang\}@utexas.edu}
}
\date{}
\begin{document}




\maketitle

\begin{abstract}
Conformal prediction is a simple and powerful tool that can quantify uncertainty without any distributional assumptions. Many existing methods only address the average coverage guarantee, which is not ideal compared to the stronger conditional coverage guarantee. Existing methods of approximating conditional coverage require additional models or time effort, which makes them not easy to scale. In this paper, we propose a modified non-conformity score by leveraging the local approximation of the conditional distribution using kernel density estimation. The modified score inherits the spirit of split conformal methods, which is simple and efficient and can scale to high dimensional settings. We also proposed a unified framework that brings together our method and several state-of-the-art. We perform extensive empirical evaluations: results measured by both average and conditional coverage confirm the advantage of our method.
\end{abstract}
\setcitestyle{numbers,square}


\section{Introduction}
Quantifying uncertainty is an important problem for reliable decision-making in many applications. 
Conformal prediction \citep{lei2018distribution, shafer2008tutorial, angelopoulos2021gentle} is a simple and powerful framework 
that can provide a valid coverage in finite samples without any distributional assumption. Recent works based on existing methods of conformal prediction \citep[e.g.][]{papadopoulos2002inductive,vovk2005algorithmic,lei2015conformal,romano2019conformalized,tibshirani2019conformal}
bring distribution-free uncertainty quantification to a range of applications,
including change-points detection \citep{volkhonskiy2017inductive},
counterfactuals and individual treatment \citep{lei2020conformal},
survival analysis \citep{candes2021conformalized, teng2021t},
outlier detection \citep{bates2021testing},
confidence set of image classification \citep{makili2012active, matiz2019inductive, angelopoulos2020uncertainty, stutz2022learning}, time series \citep{zaffran2022adaptive, lin2022conformal, gibbs2021adaptive}
and meta-learning \citep{fisch2021few}.



\begin{figure*}[t]
    \begin{minipage}{\textwidth}
    \centering
    \begin{tabular}{@{\hspace{-2.6ex}} c @{\hspace{-2.6ex}} c @{\hspace{-2.6ex}}}
    \begin{tabular}{c}
    \vspace{-0.7em}
         \includegraphics[width=0.27\textwidth]{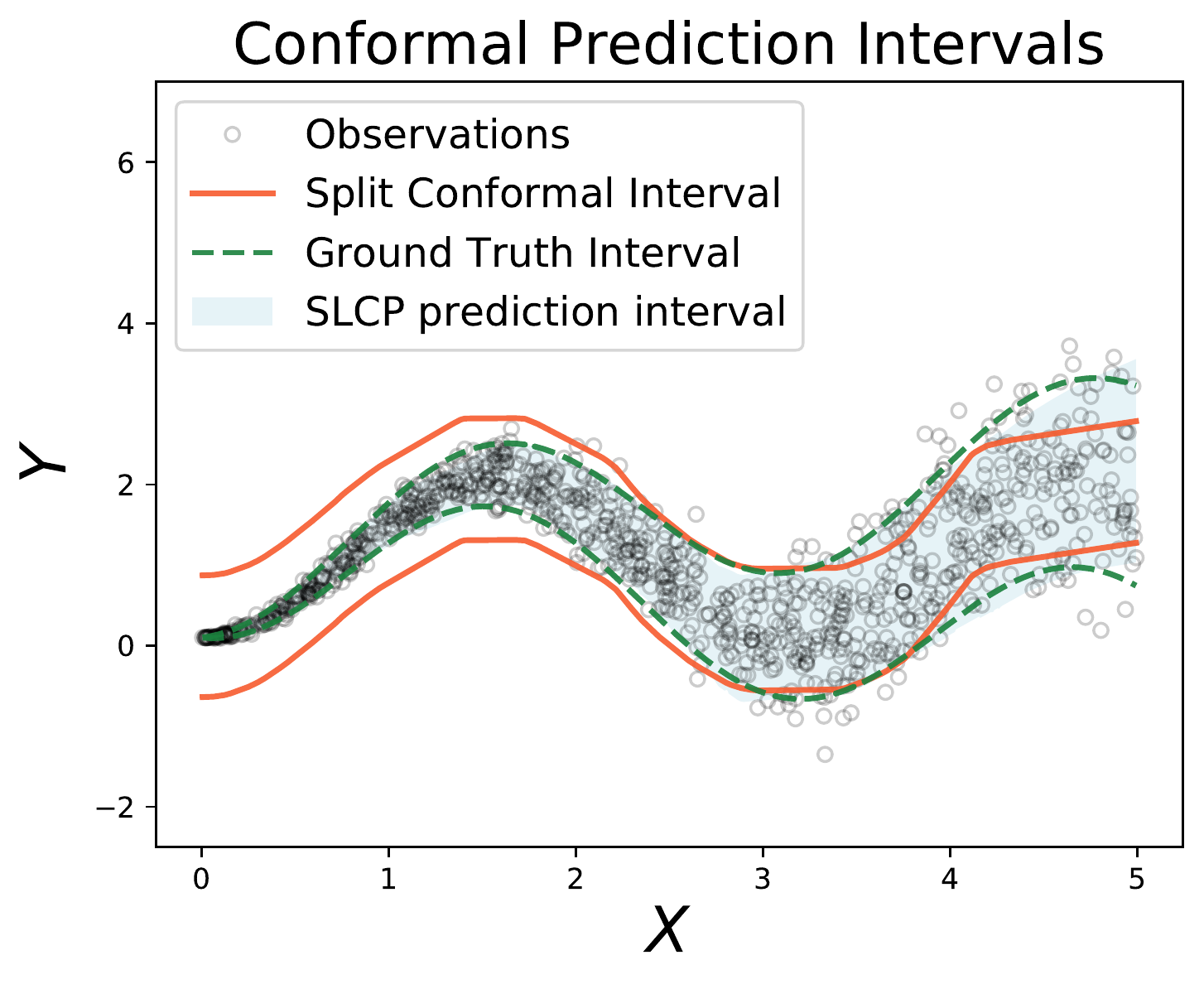} 
         \\
         {\small{(a)}}
         \end{tabular} &
     \begin{tabular}{c}
        \includegraphics[width=0.72\textwidth]{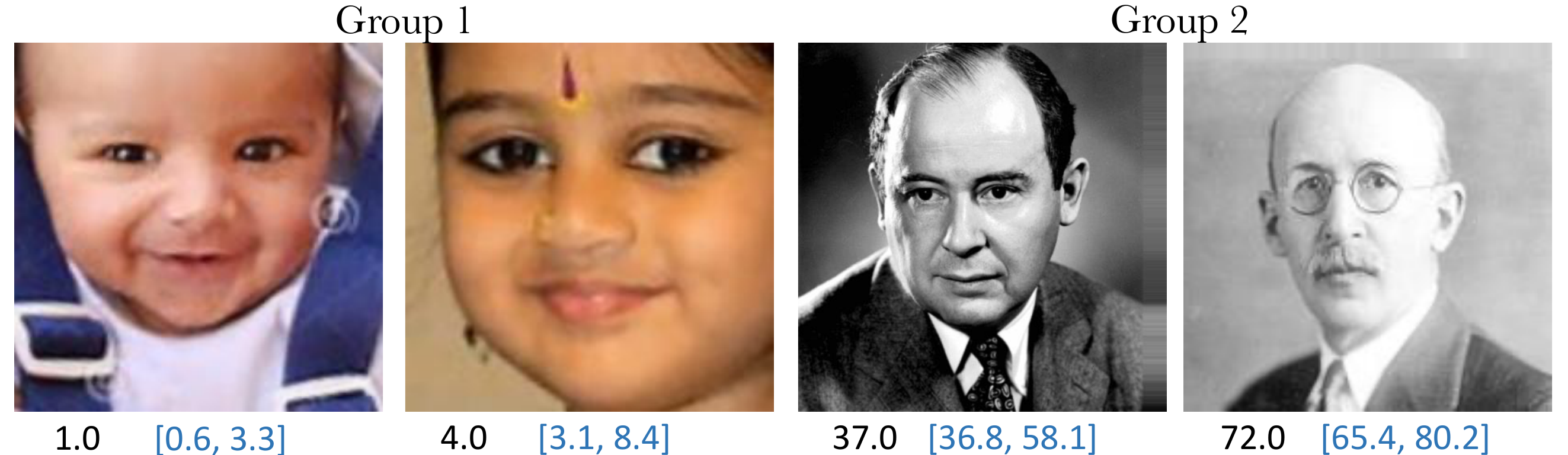} 
        \\
        {\small{(b)}}
        \end{tabular} \\
    \end{tabular}
    \end{minipage}
    \caption{Examples of the proposed conformal PI in different data modalities: (a) constructing adaptive intervals on data with heterogeneous noise; (b) age prediction on two distinct subgroups from the IMDB-WIKI dataset \citep{Rothe-IJCV-2018}, black numbers are ground truth labels and blue numbers are generated PI.}
    \label{fig:age_regression}
\end{figure*}

Although conformal prediction provides distribution-free coverage guarantee,
the coverage it provides is an average guarantee over the marginal distribution of features.
This guarantee relies on the marginal distribution of features, which may fail in covariate shift and cannot give any guarantee on observed sample from the new distribution.
For instance, when we train a model to construct predictive interval (PI) on a population under groups of different race with significantly different proportion, simply construct PI for the whole population is not enough, since it is dominated by the majority group, whereas the constructed PI for the minority group cannot be certified.
Therefore, it is necessary to build up a stronger coverage guarantee that incorporates conditions. However, conditional coverage guarantee has been proven to be theoretically impossible \citep{lei2014distribution, barber2019limits} under arbitrary distribution assumption. One can still try to approximate the conditional coverage under certain assumptions. 
Existing works \citep[e.g.][]{romano2019conformalized, guan2019conformal} leverage different approaches to approximate the conditional coverage guarantee. However, their methods either require additional quantile models in different mis-coverage rate \citep{romano2019conformalized}, or need to inversely grid search for the best parameter which is time consuming~\citep{guan2019conformal}. We aim to develop a method that is simple and easy to use without performing too much modification from the split conformal framework~\citep{papadopoulos2002inductive, lei2015conformal}.

Our method is motivated by an ideal scenario, where if the non-conformity score and the features are mutually independent, the conditional coverage guarantee is automatically satisfied since the conditional distribution of the score is identical to the marginal distribution.
We propose an approach to approximate the conditional distribution by leveraging local information in the training set,
and use it to construct a new non-conformity score which is later plugged in the same procedure similar to the split conformal methods.
The new method, called \textit{Split Localized Conformal Prediction} (SLCP), is built upon the split conformal framework. Therefore, it provides the same average coverage guarantee as split conformal methods.
In addition, by leveraging local information, it approximates the conditional guarantee and provides a more precise PI. Building upon SLCP, we propose a more general conformal prediction framework that unifies relevant state-of-the-art with potential scaling to new methods.
Empirical results demonstrate the strength of our method, which provides adaptive PI with sufficient conditional coverage. In particular, SLCP can scale to high dimensional tasks such as age prediction with respect to different sub-populations, an example of this application is illustrated in Figure~\ref{fig:age_regression}. We hope our work spurs an interest in the community in developing better localized conformal prediction methods for different applications.



\section{Problem Settings}


Given an {\it i.i.d.} regression dataset $(X_i, Y_i)_{i=1}^n \in (\R^d \times \R)^n$ consisting of a $d$-dimensional feature vector $X_i$ and the response variable $Y_i$,
we aim to provide a confidence band $\hat{\C}(x) \subseteq \R$ from the existing data that contains the unknown response variable $Y$ with high probability.
Specifically, $\hat{\C}(x)$ should satisfy the following
\textit{average coverage} guarantee:
\begin{equation}\label{eqn:average_coverage}
    \P(Y_{n+1} \in \hat{\C}(X_{n+1})) \geq 1 - \alpha,
\end{equation}
where $\alpha \in (0, 1)$ is a prespecified miscoverage level, and $(X_{n+1}, Y_{n+1})$ is a new data point which is {\it i.i.d.} drawn from the same distribution. 


\myparagraph{Quantile Lemma}
The idea of conformal prediction is based on the quantile lemma \citep{tibshirani2019conformal}, 
which states that 
for any {\it i.i.d.} (or exchangeable) sequence of random variables 
 $\{V_i\}_{i=1}^n$, $V_{n+1}$ lies in the 
empirical $(1-\alpha)$-quantile value of the previously seen training data with probability at least $1-\alpha$:
\begin{equation}\label{eqn:quantile_lemma}
    \P(V_{n+1}\leq \Q(1-\alpha, \{V_i\}_{i=1}^n))\geq 1-\alpha,
\end{equation}
where $\Q(1-\alpha, \{V_i\}_{i=1}^n)$ denotes the 
$\lceil (1-\alpha) (n+1)\rceil$ smallest value in $\{V_i\}_{i=1}^n$.


\myparagraph{Split Conformal Prediction.}

Split conformal prediction \citep{papadopoulos2002inductive, lei2015conformal}, also known as inductive conformal prediction, is a simple way to conduct conformalized prediction bands.
Split conformal prediction consists of three steps:
(1) split the training data into two disjoint subsets, 
a proper training set $\I_{train}$ and a calibration set $\I_{cal}$;
(2) fit $\I_{train}$ to a regression model to obtain the mean model $\hat{\mu}$;
(3) compute the non-conformity score for each calibration sample as
\begin{equation}\label{eqn:absolute_nc}
    V_i = |Y_i - \hat{\mu}(X_i)|
\end{equation}
to measure how bad the points $(X_i, Y_i)$ ``conform'' to the model. 
By quantile lemma in Eq \eqref{eqn:quantile_lemma} we have:
\begin{align}\label{eqn:split_conformal}
    \P(V_{n+1}\leq &d) \geq 1-\alpha \nonumber \\ 
    &\Rightarrow \P(Y_{n+1}\in \hat{\mu}(X_{n+1}) \pm d)\geq 1-\alpha,
\end{align}
where $d = \Q(1-\alpha, \{V_i\}_{i\in \I_{cal}})$ is the empirical quantile value in the calibration set.

\myparagraph{Asymmetric Conformal Band.}
In Eq. \eqref{eqn:split_conformal}, the conformal band $\hat{\C}(x) = [\hat{\mu}(x) - d, \hat{\mu}(x) + d]$ is a symmetric band with constant length. 
The reason a symmetric band is obtained is the absolute value in Eq. \eqref{eqn:absolute_nc}. \citet{linusson2014signed} modified $V_i$ to achieve an asymmetric band.
Denote that
\begin{equation}
    V_{i,1} = Y_i - \hat{\mu}(X_i),\quad V_{i,2} = \hat{\mu}(X_i) - Y_i, \nonumber
\end{equation}
similar to Eq. \eqref{eqn:split_conformal}, let $\Q_{\alpha_1} = \Q(1-\alpha_1, \{V_{i,1}\}_{i=1}^n)$ and $\Q_{\alpha_2} = \Q(1-\alpha_2, \{V_{i,2}\}_{i=1}^n)$. Based on the quantile lemma, we can derive the following results:
\begin{align}
    \P(Y_{n+1} &\leq \hat{\mu}(X_i) + \Q_{\alpha_1}) \geq 1-\alpha_1, \nonumber \\ 
    \P(Y_{n+1} &\geq \hat{\mu}(X_i) - \Q_{\alpha_2}) \geq 1-\alpha_2.
\end{align}
By union bound, let 
\begin{equation}
    \hat{\C}(x) = [\hat{\mu}(X_i) - \Q_{\alpha_2}, \hat{\mu}(X_i) + \Q_{\alpha_1}] \nonumber
\end{equation}
and set $\alpha = \alpha_1 + \alpha_2$, we then have:
\begin{align*}
    \P(Y_{n+1} \in \hat{\C}(X_{n+1}))\geq 1-\alpha.
\end{align*}



\myparagraph{Conditional Coverage Guarantee.}
Conformal prediction provides \textit{average coverage} guarantee in Eq. \eqref{eqn:average_coverage}, which needs to be marginalized among $X_{n+1}$ in the same distribution.
It is not guaranteed that for any $X=x$, the conformal band $C(x)$ can cover the response of $Y|X=s$, or as the notion of conditional coverage guarantee as follows:
\begin{equation}\label{eqn:conditional_coverage}
    \P(Y_{n+1}\in \hat{\C}(x)|X_{n+1}=x) \geq 1-\alpha.
\end{equation}


The average coverage guarantee by split conformal prediction is very limited.
For example, Figure \ref{fig:age_regression} (a) demonstrates an example of data with heteroscedastic noise, where the noise distribution is changing w.r.t. $x$. In this case, a constant interval length generated by marginal conformal coverage in Eq. (\ref{eqn:split_conformal}) cannot provide the optimal confidence interval.

\citet{vovk2012conditional, lei2014distribution} show the hardness of achieving such conditional guarantee, where any method satisfies the property in Eq. \eqref{eqn:conditional_coverage} must yield an interval $\hat{\C}(x)$ with infinite expected length at any non-atom point $x$.

\section{Split Localized Conformal Prediction} \label{sec:slcp}

Our goal is to provide a confidence band $\hat{\C}(x)$ for the incoming sample $(X_{n+1}, Y_{n+1})$ 
that satisfies the average coverage guarantee in Eq.~\eqref{eqn:average_coverage}.
Meanwhile, we also want $\hat{\C}(x)$ to approximate the conditional coverage in Eq.~\eqref{eqn:conditional_coverage}.


\subsection{Ideal Scenario v.s. Real World Scenario} \label{sec:scenarios}

Figure~\ref{fig:age_regression} (a) illustrates that split conformal prediction generally does not provide any conditional coverage guarantee.
However, to motivate the new algorithm,
we may start by looking into the ideal scenario
when split conformal prediction can work perfectly well.
\citet{lei2018distribution} states that when the true underlying distribution of $Y$ satisfies
$$
    Y = \mu(X) + \varepsilon, \quad \mathrm{where} ~\E[\varepsilon|X]=0,
$$
if $\varepsilon$ is independent of $X$, 
and our estimated conditional mean model converges to the true $\mu$,
then split conformal prediction provides oracle confidence bands that satisfy the conditional coverage guarantee.

The reason split conformal prediction works well in this scenario is
the non-conformity score $V(X,Y) = Y-\hat{\mu}(X)$ is approximating the feature independent noise $\varepsilon$.
The conditional distribution of $V$ given $X=x$ is agnostic to the choice of $x$.
Specifically, if $\hat{\mu} = \mu$,
we will have:
\begin{align*}
    \P(V_{n+1}\leq d|X=x) = \P(V_{n+1}\leq d) \geq 1-\alpha,\,
\end{align*}
where $d$ is chosen independently to $V_{n+1}$ to satisfy the average coverage guarantee, i.e. the empirical quantile of $\{V_i\}$ in the calibration set.
The primary reason is the conditional distribution of $V|X=x$
is identical to the marginal distribution of $V$.

However, in real world scenario this is unlikely to happen.
First, estimation of $\mu$ cannot be accurate due to model misspecification.
Second, the noise $\varepsilon$ can be heteroscedastic, meaning that $\varepsilon = \varepsilon(X)$ is correlated to $X$.
Both reasons result in correlation between the non-conformity score $V$ and $X$,
which prohibits the conditional distribution $V|X=x$ from being identical to the marginal distribution $V$.

\subsection{Proposed Method}
\label{sec:proposed_method}

\myparagraph{Motivation.}
The inconsistency between conditional distribution $V|X=x$ and marginal distribution $V$ is an obstacle for split conformal prediction to achieve conditional coverage guarantee. It achieves average coverage guarantee by choosing $d$ to be empirical quantile over the marginal distribution $V$, rather than the conditional distribution $V|X=x$. Assume many samples are drawn from $V|X=x$,
if we choose $d$ to be the empirical quantile of those samples, we satisfy the conditional guarantee for $V|X=x$.
This can happen when the domain of $X$ is finite and discrete, and each $X$ occurs with sufficient times.
However, in continuous domain, each value of the sample $X$ can be drawn only once,
and their corresponding non-conformity score $V$ is the only sample from the conditional distribution $V|X=x$.

Although the real conditional distribution $V|X=x$ cannot be obtained, one can approximate $V|X=x$ using local information for every $x$.
For example, for the type of $V_i$ with $X_i$ close to $x$, we can assign a high weight on sample $X_i$; and for the type of $V_i$ with $X_i$ far away from $x$, we can assign a lower weight on the sample.

\myparagraph{Localized Approximation.}
Formally, we consider $\hat{F}_h(v|X=x)$ that approximate the conditional CDF function of $F(v|X=x) = \P(V\leq v|X=x)$ by localization with kernel smoothing, also known as NW estimator~\citep{nadaraya1964estimating, watson1964smooth}
\begin{align}\label{eqn:kde_cdf}
    &\hat{F}_h(V=v|X=x) = \sum_{i\in \I_{train}} w_h(X_i|x) \mathds{1}_{v\leq V(X_i, Y_i)},  \\ 
    &\textrm{where}~~w_h(X_i|x) = \frac{K(\|\vv{f}(X_i)-\vv{f}(x)\|/h)}{\sum_{j\in \I_{train}} K(\|\vv{f}(X_j)-\vv{f}(x)\|/h)}.\notag\,
\end{align}
Here, $K(\cdot)$ is any kernel function, and $\vv{f}$ is an embedding function. 
In low dimensional space, $\vv{f}$ can be set as an identity function;
in high dimensional space, such as training a neural network regressor, we usually adopt the second last layer of the network as our embedding function. We can now use the plug-in estimator $\Q(\alpha, \hat{F}_h(V|X=x))$ to estimate the conditional quantile.
Our hope is that the empirical quantile of the localized approximation distribution is close to the quantile of the true conditional distribution:
\begin{align}
    &\P(V \leq \Q(\alpha, \hat{F}_h(V|X=x)) | X=x) \nonumber \\
    &\approx \P(V \leq \Q(\alpha, F(V|X=x)) | X=x) = \alpha,~\forall x.
\end{align}
This approximation yields a direct way of constructing confidence band $\hat{\C}(x) = \{y: V(x,y) \leq \Q(\alpha, \hat{F}_h(V|X=x))\}$ and with the hope that $\P(y\in \hat{\C}(x))\approx \alpha$.
However, since $\hat{F}_h$ is just an approximation of the ground truth, 
we don't have any coverage guarantee on $\P(V_{n+1}\leq \Q(\alpha, \hat{F}_h(V|X_{n+1})))$. The proposed method, SLCP, is to view
\begin{equation}\label{eqn:decorrelated_score}
    V^{\alpha,h}(x,y) = V(x,y) - \Q(\alpha, \hat{F}_h(V|X=x))
\end{equation}
as a new non-conformity score, 
and apply conformal correction over the new score $V_{i}^{\alpha,h} = V_{i}^{\alpha,h}(X_i,Y_i)$ in the calibration set. 
The new confidence band can be written as:
\begin{equation}\label{eqn:SLCP_band}
    \hat{\C}^{\rm SLCP}(x) = \{y: V^{\alpha,h}(x,y)\leq \Q(\alpha, \{V^{\alpha,h}_i\}_{i\in \I_{cal}})\},
\end{equation}
where the threshold is set to be the empirical quantile value of the new non-conformity score. We summarize the essential steps of SLCP in Algorithm~\ref{alg:LDCP}
\begin{pro}\label{pro:average_guarantee_SLCP}
The PI in Eq.~\eqref{eqn:SLCP_band} provides average coverage guarantee:
\begin{equation}
    \P(Y_{n+1}\in \hat{\C}^{\rm SLCP}(X_{n+1})) \geq \alpha.
\end{equation}
\end{pro}

\begin{pro}\label{pro:conditional_coverage}
As $n\rightarrow \infty$, the PI in Eq. \eqref{eqn:SLCP_band} achieves asymptotic conditional coverage:
\begin{equation}
    \P(Y_{n+1}\in \hat{\C}^{\rm SLCP}(X_{n+1})|X_{n+1}) \geq \alpha.
\end{equation}
\end{pro}

\begin{remark}
The approximation of conditional CDF of $V$ in Eq~\eqref{eqn:kde_cdf} is the weighted sum of the check function over the training set: 
since SLCP will compute the empirical quantile later on in the calibration set to guarantee average coverage.
If we perform approximation on the calibration set, the later step will have dependency over the previous step, which violates the spirit of split conformal methods and has to reduce back to the original full conformal prediction \citep[e.g.][]{guan2019conformal, tibshirani2019conformal}.
\end{remark}

\subsection{Unify Localized Conformal Prediction Methods}
\label{sec:generalized_framework}
The new non-conformity score in Eq.~\eqref{eqn:decorrelated_score} can be extended to any monotonic function that leverages the approximation of conditional CDF. We now propose a generalized framework that connects SLCP with other state-of-the-art methods.
\begin{ass}[Monotonic Score Function]\label{ass:monotonic}
    We call $m$ a monotonic score function if $m$ is a real-value function such that
    for any real value $v\in \R$ and a CDF function $F$, if $\theta_1 \geq \theta_2$, we have:
    $$
    m(v,F,\theta_1) \geq m(v, F, \theta_2).
    $$
    We also denote the (semi-)inverse function of $m$ as $p$ as:
    $$
    p(v,F, t) = \inf_{\theta\in \R}\{\theta: m(v,F,\theta) \geq t\}.
    $$
\end{ass}

For simplification, let $\hat{F}_i = \hat{F}_h(V|X=X_i)$. With any monotonic function $m$, we present the following theorem.
\begin{thm}[Generalized SLCP]\label{thm:generalized_SLCP}
    Suppose $m(v, F, \cdot)$ is a score function defined in Assumption~\ref{ass:monotonic} with its inverse function $p(v,F,\cdot)$. 
    Let $\theta_i = p(V_i, \hat{F}_i, 0)$, and suppose $\theta^* = \Q(\alpha, \{\theta_i\}\cup \{\infty\})$
    we have:
    \begin{equation}
        \Pr\{m(V_{n+1}, \hat{F}_{n+1}, \theta^*) \leq 0\} \geq \alpha.
    \end{equation}
    Moreover, let $\hat{\C}(x) = \{y: m(v, \hat{F}, \theta^*)\leq 0\}$, where $v$ is the non-conformity score for $V(x,y)$, and $\hat{F}$ is the approximate CDF conditioned on $x$, we have:
    \begin{equation}
        \Pr\{Y_{n+1} \in \hat{\C}(X_{n+1})\} \geq \alpha.
    \end{equation}
\end{thm}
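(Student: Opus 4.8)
The plan is to follow the route of Proposition~\ref{pro:average_guarantee_SLCP}: exhibit the transformed quantities $\{\theta_i\}$ as an exchangeable family to which the quantile lemma in Eq.~\eqref{eqn:quantile_lemma} applies, and then translate the resulting order-statistic bound back through the monotonic score $m$.

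First I would condition on the proper training set $\I_{train}$. Given $\I_{train}$, the localized CDF $\hat{F}_i=\hat{F}_h(V\mid X=X_i)$ from Eq.~\eqref{eqn:kde_cdf} is a fixed measurable function of $X_i$ alone, so $\theta_i=p(V_i,\hat{F}_i,0)$ equals $g(X_i,Y_i)$ for one fixed measurable map $g$, the same for every index. Since $(X_i,Y_i)_{i\in\I_{cal}}$ and $(X_{n+1},Y_{n+1})$ are i.i.d.\ and independent of $\I_{train}$, the collection $\{\theta_i\}_{i\in\I_{cal}\cup\{n+1\}}$ is exchangeable conditionally on $\I_{train}$.

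The key step is to rewrite the event $\{m(V_{n+1},\hat{F}_{n+1},\theta^*)\le 0\}$ as a comparison between $\theta_{n+1}$ and $\theta^*$. By Assumption~\ref{ass:monotonic}, $m(V_{n+1},\hat{F}_{n+1},\cdot)$ is non-decreasing, and $\theta_{n+1}=p(V_{n+1},\hat{F}_{n+1},0)=\inf\{\theta:\,m(V_{n+1},\hat{F}_{n+1},\theta)\ge 0\}$, so any $\theta^*$ strictly below this infimum gives $m(V_{n+1},\hat{F}_{n+1},\theta^*)<0$; hence $\{\theta^*<\theta_{n+1}\}\subseteq\{m(V_{n+1},\hat{F}_{n+1},\theta^*)\le 0\}$ and it is enough to lower-bound $\Pr(\theta_{n+1}>\theta^*)$ (equivalently, up to ties, $\Pr(\theta_{n+1}\ge\theta^*)$). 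Since $\theta^*=\Q(\alpha,\{\theta_i\}_{i\in\I_{cal}}\cup\{\infty\})$ is an order statistic of the remaining $\theta_i$'s---with the auxiliary $\{\infty\}$ playing the role of the ``$+1$'' that already appears in Eq.~\eqref{eqn:quantile_lemma}---the quantile lemma applied to the exchangeable family $\{\theta_i\}$ gives $\Pr(\theta_{n+1}\ge\theta^*\mid\I_{train})\ge\alpha$ for every realization of $\I_{train}$; averaging over $\I_{train}$ (tower rule) then yields the first inequality. The band statement is immediate: $Y_{n+1}\in\hat{\C}(X_{n+1})$ holds iff $m\big(V(X_{n+1},Y_{n+1}),\hat{F}_{n+1},\theta^*\big)\le 0$, and $V(X_{n+1},Y_{n+1})=V_{n+1}$, so the two probabilities coincide.

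I expect the main obstacle to be the bookkeeping around the generalized inverse $p$: the infimum defining $\theta_i$ may fail to be attained, $m(v,F,\cdot)$ may be flat at level $0$, and there may be ties among the $\theta_i$. These affect both the precise equivalence between $\{m\le 0\}$ and $\{\theta^*\le\theta_{n+1}\}$ and the exact constant produced by the quantile lemma (and its interplay with the $\lceil\alpha(n+1)\rceil$ convention and the $\{\infty\}$ padding). The conceptual content, however, is light: $V_i\mapsto\theta_i$ is deterministic given $\I_{train}$ and preserves exchangeability, so everything reduces to split-conformal calibration exactly as in Proposition~\ref{pro:average_guarantee_SLCP}.
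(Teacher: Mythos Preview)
Your route is essentially the paper's own: both condition on the training data, observe that $\theta_i = p(V_i, \hat F_i, 0)$ is a fixed measurable function of $(X_i, Y_i)$ once $\I_{train}$ is fixed, appeal to exchangeability of the calibration points together with $(X_{n+1},Y_{n+1})$ to invoke the quantile lemma, and use the monotonicity of $m$ to pass between statements about the $\theta_i$ and statements about $m(V_i,\hat F_i,\theta)$. The only cosmetic difference is the order of operations: the paper applies the quantile lemma to $M_i^{\theta}:=m(V_i,\hat F_i,\theta)$ for fixed $\theta$ and then argues that the resulting empirical threshold $g(\theta^*)=\Q(\alpha,\{m_i^{\theta^*}\}\cup\{\infty\})$ satisfies $g(\theta^*)\le 0$, whereas you apply the lemma directly to $\{\theta_i\}$ and translate back through $m$ afterward. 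These are the same computation run in opposite directions.

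One caveat is worth recording. Your step ``the quantile lemma \ldots\ gives $\Pr(\theta_{n+1}\ge\theta^*\mid\I_{train})\ge\alpha$'' does not follow from Eq.~\eqref{eqn:quantile_lemma}: with $\theta^*=\Q(\alpha,\{\theta_i\}_{i\in\I_{cal}}\cup\{\infty\})$ and the $+\infty$ padding on the upper end, the lemma delivers $\Pr(\theta_{n+1}\le\theta^*)\ge\alpha$, the opposite inequality. This is not a defect in your strategy but a sign inconsistency already present in the theorem statement and in the paper's own ``$g(\theta^*)\le 0$'' step: for the SLCP choice $m(v,F,\theta)=\Q(\alpha,F)+\theta-v$ quoted in the remark, the set $\{y:m\le 0\}$ is the \emph{complement} of the band in Eq.~\eqref{eqn:SLCP_band}, and when $\theta^*$ is the $\alpha$-quantile only a $(1-\alpha)$-fraction (not an $\alpha$-fraction) of the $\theta_i$ exceed it, so only that fraction of the $m_i^{\theta^*}$ are $\le 0$. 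If one reads the theorem with $m\ge 0$ in place of $m\le 0$ (equivalently, works with $-\theta_i$ and pads with $-\infty$), your argument goes through verbatim with the correct direction $\Pr(\theta_{n+1}\le\theta^*)\ge\alpha$.
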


\begin{remark}
Proposition~\ref{pro:average_guarantee_SLCP} can be viewed as a special case of Theorem~\ref{thm:generalized_SLCP}, where the non-conformity score in Eq. \eqref{eqn:decorrelated_score} is a special case of the monotonic score function such that
$
    m(v,F,\theta) = \Q(\alpha, F) + \theta - v.
$
If we set 
$m(v,F,\theta) = \Q(\theta, F) - v$, 
we obtain a split conformal version of \citet{guan2019conformal}.
Moreover, if we do not leverage localized information, by making $m(v, F,\theta)$ only related to the original non-conformity score, 
for example, $m(v,\theta) = \theta - v$, we can then obtain the vanilla split conformal prediction.
We provide proofs of Proposition \ref{pro:conditional_coverage}, Theorem \ref{thm:generalized_SLCP}, and discussions of the monotonic score function in the Appendix.
\end{remark}

\begin{algorithm*}[t] 
    \caption{Split Localized Conformal Prediction (SLCP)}  
    \label{alg:LDCP}
    \begin{algorithmic} 
    \STATE {\bf Input}: 
    Training Data $(X_i, Y_i)_{i=1}^n$;
    Miscoverage level $\alpha = \alpha_1 + \alpha_2$;
    Kernel function $K(\cdot)$;
    Bandwidth $h$.
    \STATE {\bf Process}:\\
    \quad Randomly split dataset index $\{1,\ldots,n\}$ as $I_{train}$ and $I_{cal}$;\\
    \quad Fit model with $I_{train}$ with training algorithm and obtain estimation: $\hat{\mu}$; \\
    \quad (Optional) Use the second last layer output of the Neural Network as the embedding function $\vv{f}$.\\
    \quad Set asymmetric nonconformity score $V_{i,1} = Y_i - \hat{\mu}(X_i)$ and $V_{i,2} = \hat{\mu}(X_i) - Y_i,~\forall i\in \I_{train}\cup\I_{cal}$.\\
    \quad Compute $V_{i,1}^{\alpha_1,h}$ and $V_{i,2}^{\alpha_2,h}$ for all $i\in \I_{cal}$ based on Eq.~\eqref{eqn:decorrelated_score} as:
    $$
    V_{i,1}^{\alpha_1,h} = V_{i,1} - \Q(\alpha_1, \hat{F}_h(V_1|X=X_i)),\quad 
    V_{i,2}^{\alpha_2,h} = V_{i,2} - \Q(\alpha_2, \hat{F}_h(V_2|X=X_i)),~\forall i \in \I_{cal}.
    $$
    \STATE {\bf Output}:
    Conformal band 
    \begin{align}\label{eqn:localized_band}
        \hat{\C}^{\rm SLCP}(x) = [\hat{\mu}(x)-\underbrace{\textcolor{cyan}{\Q(\alpha_2, \hat{F}_h(V_2|X=x))}}_{\rm{ Empirical~Quantile}}-
    \underbrace{\textcolor{violet}{\Q(\alpha_2, \{V_{i,2}^{\alpha_2,h}\}_{i\in \I_{cal}})}}_{\rm{Conformal~Correction}}&, \notag\\
    \hat{\mu}(x)+\textcolor{cyan}{\Q(\alpha_1, \hat{F}_h(V_1|X=x))}
    +\textcolor{violet}{\Q(\alpha_1, \{V_{i,1}^{\alpha_1,h}\}_{i\in \I_{cal}})}&].
    \end{align}
    \end{algorithmic} 
\end{algorithm*}



\subsection{Practical Implementation}
\label{sec:practical}
\paragraph{Choice of Non-conformity Score $V$}
The simplest choice of $V$ is by setting $V=Y$ with a naive constant model.
Surprisingly, even without any regression model, our method can achieve a good performance
in terms of coverage rate and interval length.
The reason $Y$ can serve as a good non-conformity score in SLCP is $Q(\alpha, \hat{F}(Y|X=x))$ itself can be viewed as a non-parametric regression for conditional quantile value of $Y|X=x$.
In practice, we choose $V = \pm(Y - \hat{\mu}(X))$ and use SLCP to construct the new non-conformity score.
Here, $\hat{\mu}$ can be the estimation of either conditional mean or quantile model using the training set data.
The benefit of this non-conformity score is that the inverse computation of the final confidence band $\hat{\C}(x)$
is just inverting the conformal correction operation, where the inversion is simply a plus or minus operation and results in an easy-to-compute confidence band.

\myparagraph{Asymmetric Conformal Prediction}
We adopt an asymmetric nonconformity score to construct the prediction band.
As shown in Algorithm \ref{alg:LDCP}, this gives us a flexible framework that can later be extended to conformal prediction methods with multiple regressors such as conformalized quantile regression. Additionally, the asymmetric nonconformity score will also lead to a straightforward theoretical analysis.

\myparagraph{Choice of Kernel and Bandwidth}
We mainly employed the Gaussian RBF kernel $K(x) = \frac{1}{\sqrt{2\pi}}e^{-\frac{x^2}{2}}$ in our experiments. As an ablation study, 
we also compared with different choices of kernels such as Boxcar $K(x) = \frac{1}{2}I(x)$ and Epanechnikov $K(x) = \frac{3}{4}(1 - x^2)I(x)$, where $I(x) = 1$ if $|x| \leq 1$ and 0 otherwise. \citet{nadaraya1964estimating} and \citet{watson1964smooth} suggest a bias-variance trade-off on the selection of bandwidth, where we performed ablation studies in the Appendix. In our experiments, we simplify this procedure by using the ``median trick'', where $h$ is the median of the pairwise distance of $\|\vv{f}(X_i)-\vv{f}(X_j)\|$ for all different $i\neq j$ in the training set. This choice of bandwidth achieves a competitive empirical performance.

\myparagraph{Accelerate NW estimation using Mini-Batch}
With a large amount of training/calibration data, the computation of the NW estimation in Eq~\eqref{eqn:kde_cdf} becomes a bottleneck, which requires at least $\mathcal{O}(|\I_{train}|\cdot |\I_{cal}|)$ time complexity to obtain the pairwise distance between the training and calibration set.
In order to accelerate the computation,
we introduce a slightly modified version of SLCP, which greatly reduce the computational time without hampering the performance too much.
The new approach samples a mini-batch of training data $\mathcal{B}\subseteq \mathcal{I}_{train}$ and compute the NW estimation in Eq~\eqref{eqn:kde_cdf} with $\mathcal{B}$ replacing $\mathcal{I}_{train}$. The computation complexity reduces to $\mathcal{O}(|B|\cdot |I_{cal}|)$ and if we use a fixed size of the batch, i.e. $|B| = k$, we can therefore obtain a linear computation complexity to the sample size. 


\section{Related Works}
\myparagraph{Conformal Prediction Methods with Localization}
\citet{guan2019conformal,guan2021localized} proposed a localized conformal prediction method to construct prediction intervals.
They are also inspired by the localized conditional distribution in Eq.~\eqref{eqn:kde_cdf}.
As mentioned, both \citet{guan2019conformal}'s method and ours can be unified in our generalized framework in Section~\ref{sec:generalized_framework} under different score functions. However, the major difference is that they compute the empirical quantile $\hat{F}$ under the calibration dataset, which requires a full conformal style of inverse computation for $y$ while also performing a grid search for the best mis-coverage rate. The full conformal style of computation restricts the scalability of this method, which leads to a much slower computation compared with SLCP. 
A more recent work~\citep{lin2021locally} follows \citet{guan2019conformal}'s idea, where they further optimize the kernel regression parameter to obtain the weight in Eq.~\eqref{eqn:kde_cdf}.
Different from our method, they directly set the threshold in Eq.~\eqref{eqn:SLCP_band} as 0 without the final conformal correction.
Moreover, they do not consider the asymmetric conformal band and their results tend to be over-conservative. In addition, earlier work from \citet{lei2018distribution} proposed to use scaled residuals to locally adapt to heteroskedasticity. However, the extra variability from estimating the denominator could lead to inflated PIs.



The other line of work by \citet{izbicki2019flexible} proposed Dist-Split and CD-Split. In particular, Dist-Split belongs to our generalized framework if we set $V=Y$ and apply the same monotonic score function $m$ as \citet{guan2019conformal}. CD-Split is built upon the locally valid prediction band from \citet{lei2014distribution}, where its Euclidean distance used to partition the feature space is replaced by a ``profile distance'': it measures the distance of conditional density functions. HPD-Split \citep{izbicki2020cd} is a variant of CD-Split that requires less tuning. This line of methods uses the conditional density estimator proposed by \citet{izbicki2017converting} instead of the NW estimator.

\myparagraph{Conformalized Quantile Regression}
\citet{romano2019conformalized} proposed CQR to obtain a better non-conformity score based on the estimated conditional quantile functions. However, it is generally more difficult to train an accurate quantile function in high-dimensional settings. Moreover, as mentioned by \citet{lin2021locally}, CQR is not a post-hoc method and needs to retrain the quantile models every time when more than one coverage level is desired.
In comparison, SLCP can construct conformal band Eq. \eqref{eqn:localized_band} using either mean regression model $\hat{\mu}$ or quantile regression models. Therefore, SLCP does not need to train a new model with different coverage levels $\alpha$. Similar work by \citet{feldman2021improving} improves the conditional coverage of standard quantile regression by modifying the loss function. But this method does not provide a finite sample coverage guarantee and does not belong to the category of conformal prediction.
\myparagraph{Other Methods to Approximate Conditional Coverage}
Restricted approximated conditional coverage \citep{barber2019limits} is another method that separates the whole domain into a set of sub-domains $\{\X_i\}$ 
with small probability mass $\P(\X_i)\leq \delta$,
and relax the conditional coverage into marginalized coverage in each sub-domain: $\P(Y\in \C(X)|X\in \X_i)\geq 1-\alpha,~\forall \X_i.$
However, the performance depends heavily on the predefined separation, and prediction intervals near the boundary of each sub-domain are inferior. \citet{tibshirani2019conformal} discussed conformal prediction under covariate shift and proposed an initial version of approximating conditional coverage guarantee based on localized information. However, they only consider localization around a fixed point $x_0$. In addition, \citet{sesia2021conformal} proposed a histogram method to estimate the conditional distribution and achieved asymptotic conditional coverage guarantees for black-box models. \citet{chernozhukov2021distributional} constructed robust conditional PI based on quantile and distribution regression. \citet{kivaranovic2020adaptive} designed average and conditional PI for deep neural networks via optimizing a novel loss function. Full conformal prediction and Jackknife methods \cite{kim2020predictive} have also been studied to provide a predictive coverage guarantee. But these methods are less efficient and over-conservative in prediction intervals \citep{lei2018distribution}. Different from the conformal regression problem, conformal classification \citep{makili2012active, matiz2019inductive} aims to build a ``prediction set'' where the total probability covered by the set should reach the specified level. There are several recent works approach this problem in terms of adaptivity \citep[e.g][]{angelopoulos2020uncertainty, einbinder2022training, romano2020classification} or efficiency \citep{stutz2022learning} and achieved state-of-the-art results.

\section{Experiments} \label{sec:exp}
\begin{figure*}[t!]
    \begin{minipage}{\textwidth}
    \centering
    \begin{tabular}{@{\hspace{-2.6ex}}c @{\hspace{-2.6ex}} c @{\hspace{-2.6ex}} c}
        \begin{tabular}{c}
        \includegraphics[width=.32\textwidth]{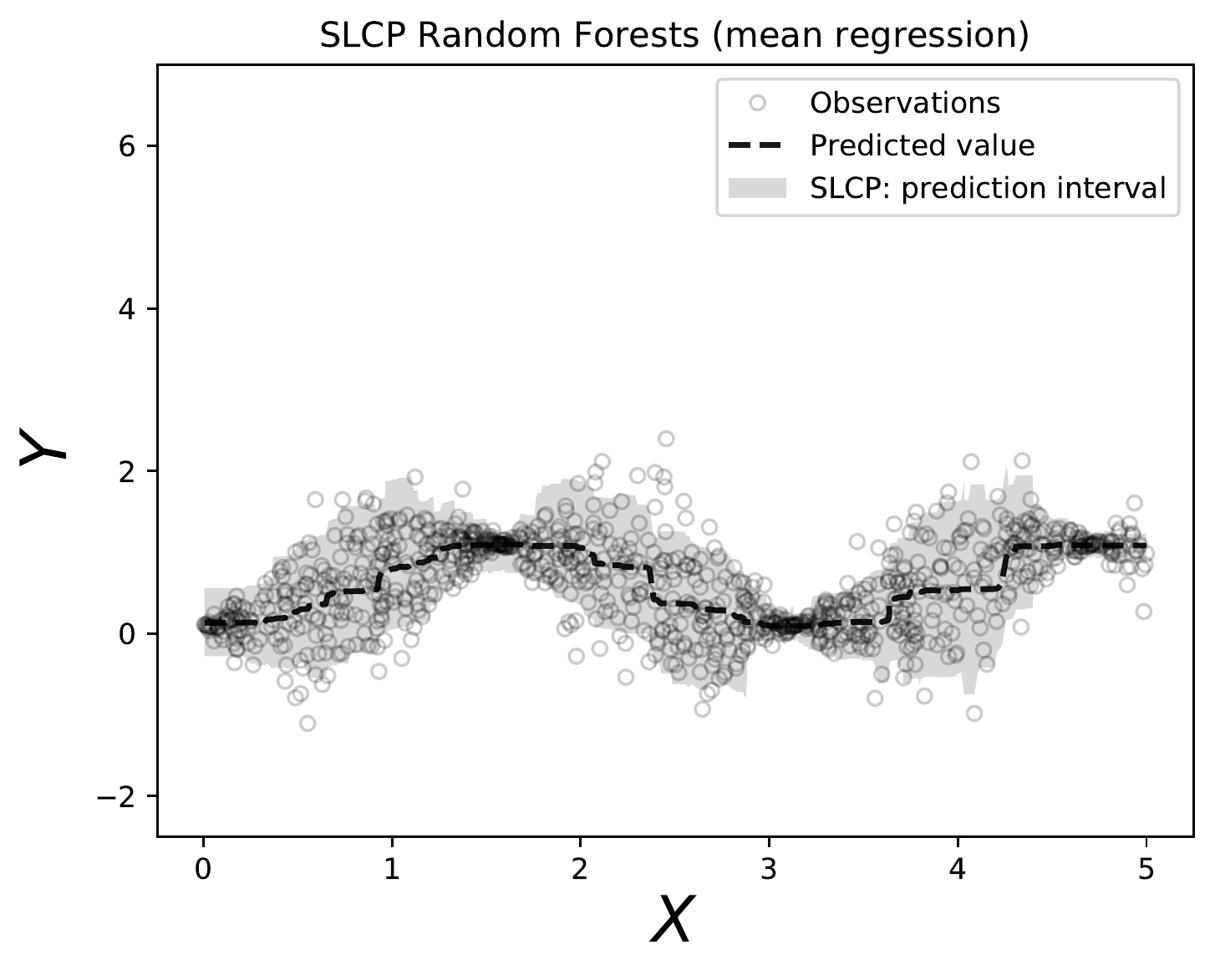} 
        \\
        {\small{(a) Avg. cov. 90.36\%; Avg. len. 1.27}}
        \end{tabular} &
        \begin{tabular}{c}
        \includegraphics[width=.32\textwidth]{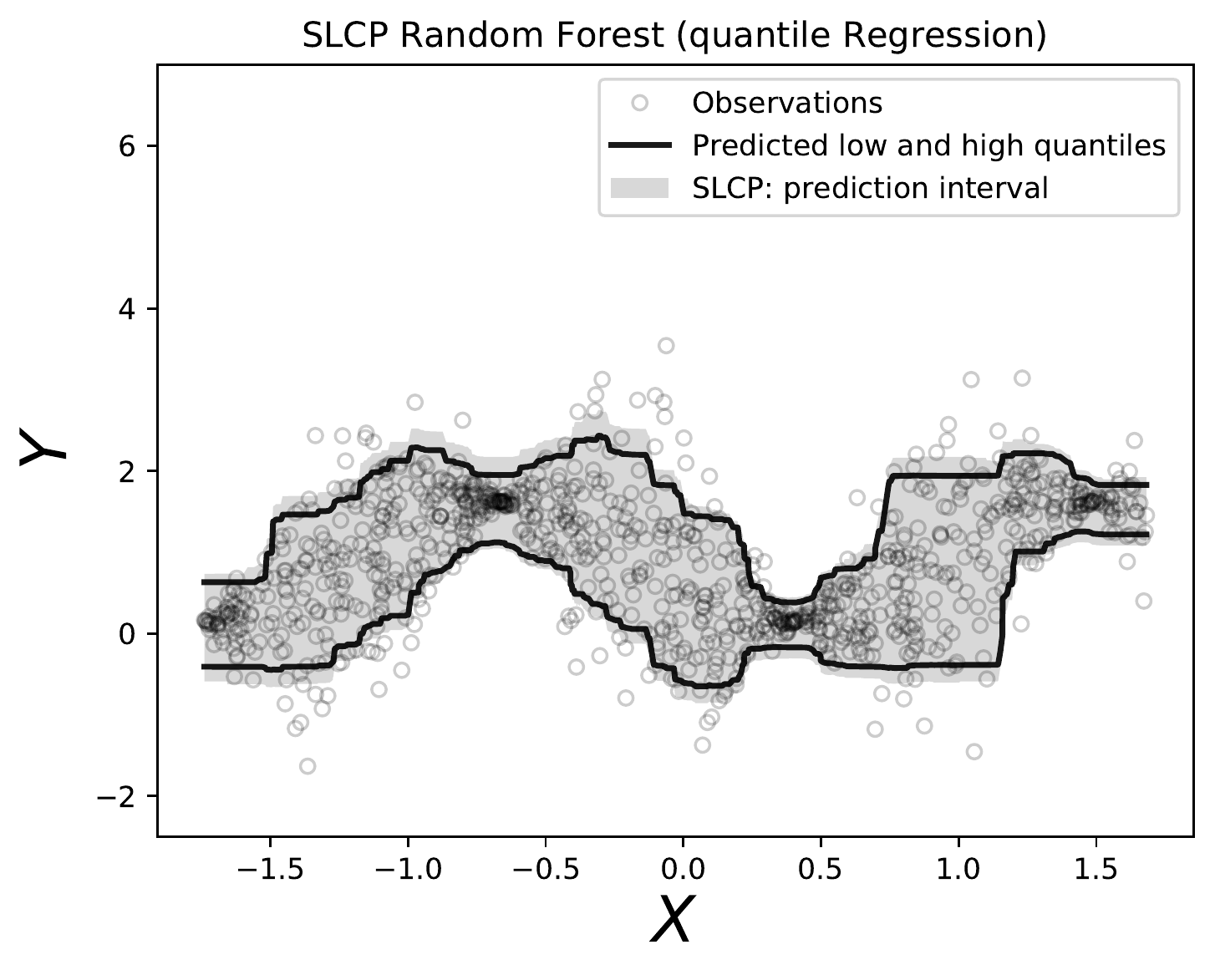}
        \\
        {\small{(b) Avg. cov. 90.06\%; Avg. len. 1.86}}
        \end{tabular} &
        \begin{tabular}{c}
        \includegraphics[width=.32\textwidth]{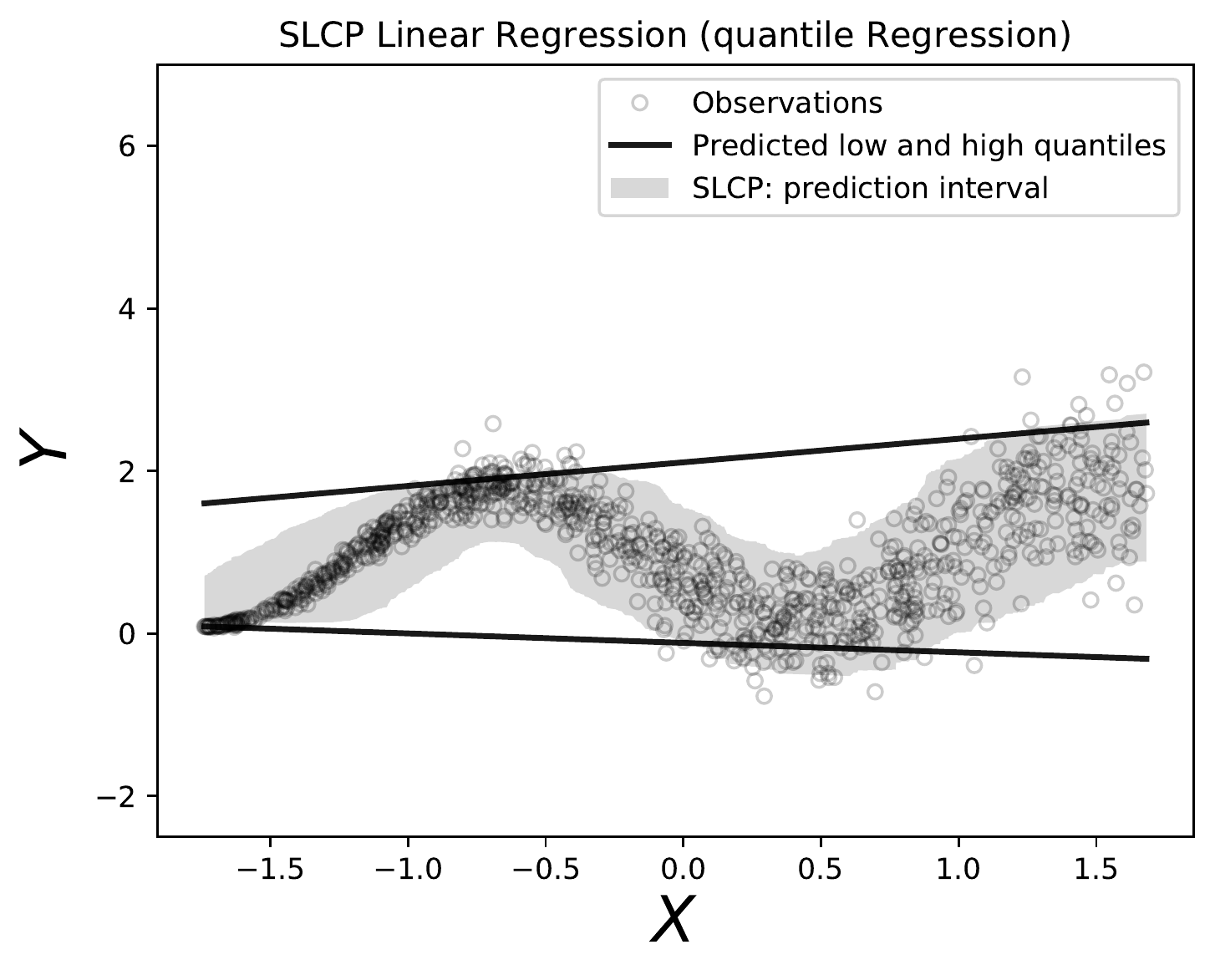} 
        \\
        {\small{(c) Avg. cov. 90.52\%; Avg. len. 1.52}}
        \end{tabular} \\
        \end{tabular}
    \end{minipage}
    \vspace{-1.5ex}
    \caption{
    Prediction intervals constructed by SLCP on multiple synthetic datasets. (a) SLCP with a mean random forest estimator; (b) SLCP with quantile random forest estimators; (c) SLCP with quantile linear regression. Results have shown that SLCP provides adaptive intervals with coverage guarantees and is robust to misspecified models.}
    \label{fig:all_sim}
\end{figure*}

In this section, we empirically evaluate the proposed method, SLCP, over other state-of-the-art solutions on various datasets and experiments. Overall, we show that SLCP is an effective solution to locally approximate conditional coverage in different situations. The constructed PI is robust even if the models are misspecified to the inherent nature of given data. We start by discussing the experiment settings in section \ref{sec:exp1}, followed by the average and conditional coverage results on simulated and real datasets in section \ref{sec:exp2}, \ref{sec:exp3} and \ref{sec:exp4}. 
Our experiment is performed on an Ubuntu server with 4 NVIDIA GeForce 3090 GPUs. We used 0.6/0.2/0.2 as the train/validation/test split ratio, where the training and validation set is used to compute empirical quantile and conformal correction for constructing prediction intervals of SLCP in Eq. \eqref{eqn:localized_band}. 

\subsection{Methods} \label{sec:exp1}
We focus on regression problems where adaptive predictive interval (PI) is a requisite. Three regression models are used to evaluate each conformal prediction method: ridge regression \citep{zaikarina2016lasso}, random forests \citep{meinshausen2006quantile}, and neural networks \citep{taylor2000quantile}. Each model is adapted to both its mean and quantile regression variants. Parameters of ridge regression and random forest are determined using cross-validation. The expected coverage rate of all experiments is set to be 90\%.

\begin{table*}[t]
\centering
\renewcommand
\arraystretch{1}
\scalebox{1}{
\begin{tabular}{c|c|c|c|c|c|c|c|c}
\hlinewd{1.5pt}
\multicolumn{2}{c}{Model / Method} & SLCP & CQR & LCR & MAD-Split & Dist-Split & CD-Split & LVD \\ \hline
\multirow{3}{*}{ Ridge } & Length & \textbf{2.12} & 2.25 & 2.38 & 2.94 & 2.21 & \underline{2.16} & - \\
& Coverage & 90.82 & 89.8 & 89.59 & 90.13 & 90.02 & 89.85 & - \\
& Correlation & \textbf{0.45} & 0.39 & 0.41 & 0.35 & \underline{0.43} & 0.42 & - \\ \hline
\multirow{3}{*}{ Random Forest } & Length & \underline{1.38} & 1.41 & 1.63 & 1.82 & 1.43 & \textbf{1.36} & - \\
& Coverage & 90.32 & 90.33 & 90.12 & 89.95 & 90.36 & 90.27 & - \\
& Correlation & \underline{0.58} & 0.57 & 0.47 & 0.44 & 0.56 & \textbf{0.61} & - \\ \hline
\multirow{3}{*}{ Neural Nets } & Length & \textbf{1.38} & \underline{1.40} & 1.54 & 1.81 & 1.46 & 1.43 & 2.48 \\
& Coverage & 90.08 & 90.06 & 89.47 & 89.95 & 90.12 & 90.09 & 93.42 \\
& Correlation & \textbf{0.57} & \textbf{0.57} & 0.51 & 0.49 & 0.54 & \underline{0.56} & 0.38 \\
\hlinewd{1.5pt}
\end{tabular}
}
\caption{Average length (normalized), coverage rate (measuring \textit{average coverage}), and Pearson's correlation (measuring \textit{conditional coverage}) of prediction intervals constructed by different methods. For average length and Pearson's correlation, the best results are highlighted in bold font and the second-best results are highlighted in underline. All results are averaged across 13 real datasets and 20 random experiments.} 
\label{tab:real_results}
\end{table*}

\begin{table*}[t]
\centering
\renewcommand
\arraystretch{1.2}
\scalebox{1}{
\begin{tabular}{c|c|c|c|c|c}
\hlinewd{1.5pt}
Length / Cov. Rate & SLCP & CD-Split & CQR & MAD-Split & LVD \\ \hline
Overall (93627) & \textbf{4.65} / 90.03 & 12.79 / 90.15 & 23.14 / 91.24 & 16.26 / 91.12 & 32.47 / 94.45 \\ \hline
Age 0 - 20 (17423) & \textbf{3.75} / 90.04 & 8.63 / 90.23 & 23.95 / 91.37 & 20.39 / 92.24 & 38.64 / 95.49 \\ \hline
Age 21 - 60 (48815) & \textbf{4.63} / 90.19 & 14.82 / 90.56 & 22.98 / 91.19 & 15.43 / 90.89 & 31.23 / 94.22 \\ \hline
Age > 60 (27389) & \textbf{5.54} / 89.82 & 10.97 / 89.14 & 23.47 / 91.25 & 17.69 / 90.46 & 29.07 / 93.34 \\
\hlinewd{1.5pt}
\end{tabular}
}
\caption{Conditional coverage results of age regression task. The dataset can be divided into subgroups with different ages. Numbers within the bracket are the size of the subgroup. SLCP provides the most precise PIs with specified coverage rates.}
\label{tab:age_regression}
\end{table*}

\begin{table}
\centering
\renewcommand
\arraystretch{1}
\vspace{-1.1em}
\scalebox{0.95}{
\begin{tabular}{c|c|c|c|c} \hline
\multirow{2}{*}{ Data Types } & \multicolumn{2}{c}{Heteroscedastic} & \multicolumn{2}{c}{BiModal} \\ \hhline{~|----}
& Length & Coverage & Length & Coverage \\ \hline
LCP & 1.98 & 90.22 & 5.46 & 90.11 \\
MADSplit & 1.93 & 90.17 & 5.74 & 90.16 \\
CQR & 1.89 & 90.12 & 5.58 & 90.32 \\
Dist-Split & 2.09 & 90.56 & 5.63 & 90.38 \\
CD-Split & 1.91 & 90.13 & \textbf{5.45} & 90.15 \\
SLCP (Ours) & \textbf{1.82} & 90.08 & 5.54 & 90.36 \\\hline
\end{tabular}}
\caption{Comparison of PIs over two synthetic datasets.}
\label{tab:dist-split-results}
\end{table}

\paragraph{Baselines}
We evaluate SLCP along with multiple prominent methods that are most related to our work, including:
\begin{enumerate}
\item conformalized quantile regression (CQR) \citep{romano2019conformalized}, which constructs the nonconformity score using two quantile estimators, resulting in adaptive conformal intervals.
\item localized conformal prediction (LCP) \citep{guan2021localized}, which leverages local information by constructing an approximation of the empirical distribution.
\item locally adaptive conformal prediction (MADSplit) \citep{lei2018distribution}, which constructs adaptive conformal interval with an additional estimator (mean absolute deviation) to normalize the residuals.
\item locally valid and discriminative prediction intervals (LVD) \citep{lin2021locally}, which constructs discriminative prediction intervals using kernel regression.
\item Dist/CD-Split \citep{izbicki2019flexible}, which approximate conditional coverage using conditional density estimators \citet{izbicki2017converting}; CD-Split generates predictions based on local partition of feature space. 
\end{enumerate}

\paragraph{Evaluation Metrics}
We evaluate results based on metrics of both average coverage and conditional coverage. For average coverage, we employ the averaged length of prediction interval and coverage rate used by many recent methods \citep[e.g.][]{romano2019conformalized}. For conditional coverage, we use Pearson's correlation between the interval size and the indicator of coverage proposed by \citet{feldman2021improving}. In addition, for datasets with categorical variables, we partition the input space based on these variables and measure the coverage results for each subset.
Since SLCP can be built on both conditional mean and quantile estimators, we therefore assign SLCP to corresponding estimators when comparing with baselines with different regressors. 


\subsection{Synthetic Datasets} \label{sec:exp2}
To demonstrate the advantage of SLCP, we perform evaluations on multiple simulated datasets that contain heterogeneous noise and outliers. Figure \ref{fig:all_sim} shows some examples of how SLCP performs on these datasets. Figure (a) shows SLCP applied on the random forest mean estimator, which provides adaptive intervals that result in a shorter average interval length compared with vanilla split conformal prediction. In figure (b), a quantile random forest model is trained with SLCP, we observe that the generated PI can precisely adapt to the variation of data. We further study the behavior of SLCP when the regression model is less well specified to the underlying data distribution. As shown in figure (c), we fit a quantile linear regression model on a sinusoidal function with noise. Results show that SLCP can still produce adaptive intervals by incorporating localized information into the nonconformity score. This is a much-desired property given that model misspecification commonly happens in real-world complex datasets, which further supports the advantage of SLCP over other baselines.

We then quantitatively compare SLCP with baseline methods, using synthetic data from \citet{izbicki2019flexible}. The PIs are constructed upon quantile random forest estimators. As Table \ref{tab:dist-split-results} shows, SLCP can achieve better results on heteroscedastic data while closely approaching the results of CD-Split on bimodal data. We also observe that the LCP and CD-Split methods require more extensive computation than others, given that LCP is a full-conformal-based method and CD-Split needs to compute the profile distance.
\subsection{Real Datasets} \label{sec:exp3}
\paragraph{Tabular Data}
We now summarize the results on 13 real datasets, which are among the popular choices for regression as also used by \citet{romano2019conformalized} and \citet{lin2021locally}. We first normalize all features to be zero mean and unit variance and re-scale the response by its mean absolute values. The normalized data is then used to fit regression models and construct PIs by different conformal prediction methods. Table \ref{tab:real_results} shows the average performance across all datasets and random seeds after normalization. From the results, we observe that SLCP (quantile estimator) generally obtains the shortest PI given that the coverage rate has reached the desired level, resulting in better uncertainty characterization without being over-conservative. Specifically, SLCP provides better performance than other split-conformal-based methods: Dist-Split, CQR, and MADSplit given its ability to construct more robust PIs in different situations. CD-Split also achieves competitive performance, particularly on random forest models. For methods building upon neural network estimators, LVD provides much longer PIs given its sensitivity to outliers, which is normally unnecessary in many applications. Meanwhile, CQR combined with neural networks also demonstrates superior performance compared with other baselines.

\paragraph{Age Regression Data}
We also evaluate SLCP and other baselines under high-dimensional setting. To the best of our knowledge, experiments from all prior works focused on tabular data with very few variants. To address this limitation, we investigated an age regression problem featured in multiple datasets \citep{Rothe-ICCVW-2015, Rothe-IJCV-2018, zhifei2017cvpr}. We implement an age regression model based on Resnet-34 \citep{he2016deep} trained using MSE loss; the embedding obtained from the pre-trained model is then used to construct conformal PIs. Table \ref{tab:age_regression} shows coverage results using SLCP (mean estimator) and other baselines, SLCP provides the most accurate PIs while satisfying the prespecified coverage rate. Note that in this experiment, CQR shows sub-optimal performance than other datasets mainly because of the employment of the quantile loss function in training deep neural networks. Given that quantile loss is an $L_1$ loss function, it is neither effective nor efficient in training large-scale models.

\subsection{Conditional Coverage Results} \label{sec:exp4}
Table 1 shows that SLCP can also achieve better conditional coverage from its higher Pearson's correlation. In addition, for datasets with categorical variables, we leverage this information to partition the input space into multiple subgroups. Table \ref{tab:age_regression} also shows conditional coverage results on the age regression task divided into multiple age groups. Additional results on other tabular data can be found in Appendix. 
Lastly, readers can find detailed ablation studies on multiple design choices (e.g., batch size and kernel density bandwidth) that are mentioned in section \ref{sec:slcp}. We also investigate how changes in regression model capacity and covariate shift on test data can affect SLCP and other baseline methods. Results show that SLCP is more robust over baselines under different situations.

\section{Concluding Remark}
In this paper, we propose a simple and effective approach SLCP that constructs the non-conformity score by leveraging localized information. Compared with other conformal prediction methods, SLCP can approximate conditional coverage guarantee while satisfying average coverage guarantee, and is robust under circumstances such as heterogeneous data and mis-specified models. Besides, we also proposed a unified conformal prediction framework that includes multiple state-of-the-art methods. Empirical results on image and tabular datasets show that SLCP can achieve better conditional coverage and more robust to covariate shift compared with previous methods. As far as our concern, this work does have positive social impacts as (1) the improvement of predictive intervals makes an important contribution on reliable decision making; (2) the approximated conditional coverage guarantee provides better coverage for each subgroup of data, which can be particularly helpful for fairness related problems. As for a future work, we plan to investigate application-specific conformal prediction methods incorporated with different localized approximation strategies under the proposed unified framework.

\bibliography{references}

\begin{thebibliography}{47}
\providecommand{\natexlab}[1]{#1}
\providecommand{\url}[1]{\texttt{#1}}
\expandafter\ifx\csname urlstyle\endcsname\relax
  \providecommand{\doi}[1]{doi: #1}\else
  \providecommand{\doi}{doi: \begingroup \urlstyle{rm}\Url}\fi

\bibitem[Angelopoulos et~al.(2020)Angelopoulos, Bates, Malik, and
  Jordan]{angelopoulos2020uncertainty}
Anastasios Angelopoulos, Stephen Bates, Jitendra Malik, and Michael~I Jordan.
\newblock Uncertainty sets for image classifiers using conformal prediction.
\newblock \emph{arXiv preprint arXiv:2009.14193}, 2020.

\bibitem[Angelopoulos and Bates(2021)]{angelopoulos2021gentle}
Anastasios~N Angelopoulos and Stephen Bates.
\newblock A gentle introduction to conformal prediction and distribution-free
  uncertainty quantification.
\newblock \emph{arXiv preprint arXiv:2107.07511}, 2021.

\bibitem[Barber et~al.(2019)Barber, Candes, Ramdas, and
  Tibshirani]{barber2019limits}
Rina~Foygel Barber, Emmanuel~J Candes, Aaditya Ramdas, and Ryan~J Tibshirani.
\newblock The limits of distribution-free conditional predictive inference.
\newblock \emph{arXiv preprint arXiv:1903.04684}, 2019.

\bibitem[Bates et~al.(2021)Bates, Cand{\`e}s, Lei, Romano, and
  Sesia]{bates2021testing}
Stephen Bates, Emmanuel Cand{\`e}s, Lihua Lei, Yaniv Romano, and Matteo Sesia.
\newblock Testing for outliers with conformal p-values.
\newblock \emph{arXiv preprint arXiv:2104.08279}, 2021.

\bibitem[Cand{\`e}s et~al.(2021)Cand{\`e}s, Lei, and
  Ren]{candes2021conformalized}
Emmanuel~J Cand{\`e}s, Lihua Lei, and Zhimei Ren.
\newblock Conformalized survival analysis.
\newblock \emph{arXiv preprint arXiv:2103.09763}, 2021.

\bibitem[Chernozhukov et~al.(2021)Chernozhukov, W{\"u}thrich, and
  Zhu]{chernozhukov2021distributional}
Victor Chernozhukov, Kaspar W{\"u}thrich, and Yinchu Zhu.
\newblock Distributional conformal prediction.
\newblock \emph{Proceedings of the National Academy of Sciences}, 118\penalty0
  (48):\penalty0 e2107794118, 2021.

\bibitem[Einbinder et~al.(2022)Einbinder, Romano, Sesia, and
  Zhou]{einbinder2022training}
Bat-Sheva Einbinder, Yaniv Romano, Matteo Sesia, and Yanfei Zhou.
\newblock Training uncertainty-aware classifiers with conformalized deep
  learning.
\newblock \emph{arXiv preprint arXiv:2205.05878}, 2022.

\bibitem[Feldman et~al.(2021)Feldman, Bates, and Romano]{feldman2021improving}
Shai Feldman, Stephen Bates, and Yaniv Romano.
\newblock Improving conditional coverage via orthogonal quantile regression.
\newblock \emph{Advances in Neural Information Processing Systems},
  34:\penalty0 2060--2071, 2021.

\bibitem[Fisch et~al.(2021)Fisch, Schuster, Jaakkola, and
  Barzilay]{fisch2021few}
Adam Fisch, Tal Schuster, Tommi Jaakkola, and Regina Barzilay.
\newblock Few-shot conformal prediction with auxiliary tasks.
\newblock \emph{arXiv preprint arXiv:2102.08898}, 2021.

\bibitem[Gibbs and Candes(2021)]{gibbs2021adaptive}
Isaac Gibbs and Emmanuel Candes.
\newblock Adaptive conformal inference under distribution shift.
\newblock \emph{Advances in Neural Information Processing Systems},
  34:\penalty0 1660--1672, 2021.

\bibitem[Guan(2019)]{guan2019conformal}
Leying Guan.
\newblock Conformal prediction with localization.
\newblock \emph{arXiv preprint arXiv:1908.08558}, 2019.

\bibitem[Guan(2021)]{guan2021localized}
Leying Guan.
\newblock Localized conformal prediction: A generalized inference framework for
  conformal prediction.
\newblock \emph{arXiv preprint arXiv:2106.08460}, 2021.

\bibitem[He et~al.(2016)He, Zhang, Ren, and Sun]{he2016deep}
Kaiming He, Xiangyu Zhang, Shaoqing Ren, and Jian Sun.
\newblock Deep residual learning for image recognition.
\newblock In \emph{Proceedings of the IEEE conference on computer vision and
  pattern recognition}, pages 770--778, 2016.

\bibitem[Izbicki and B.~Lee(2017)]{izbicki2017converting}
Rafael Izbicki and Ann B.~Lee.
\newblock Converting high-dimensional regression to high-dimensional
  conditional density estimation.
\newblock 2017.

\bibitem[Izbicki et~al.(2019)Izbicki, Shimizu, and Stern]{izbicki2019flexible}
Rafael Izbicki, Gilson~T Shimizu, and Rafael~B Stern.
\newblock Flexible distribution-free conditional predictive bands using density
  estimators.
\newblock \emph{arXiv preprint arXiv:1910.05575}, 2019.

\bibitem[Izbicki et~al.(2020)Izbicki, Shimizu, and Stern]{izbicki2020cd}
Rafael Izbicki, Gilson Shimizu, and Rafael~B Stern.
\newblock Cd-split and hpd-split: efficient conformal regions in high
  dimensions.
\newblock \emph{arXiv preprint arXiv:2007.12778}, 2020.

\bibitem[Kim et~al.(2020)Kim, Xu, and Barber]{kim2020predictive}
Byol Kim, Chen Xu, and Rina~Foygel Barber.
\newblock Predictive inference is free with the jackknife+-after-bootstrap.
\newblock \emph{arXiv preprint arXiv:2002.09025}, 2020.

\bibitem[Kivaranovic et~al.(2020)Kivaranovic, Johnson, and
  Leeb]{kivaranovic2020adaptive}
Danijel Kivaranovic, Kory~D Johnson, and Hannes Leeb.
\newblock Adaptive, distribution-free prediction intervals for deep networks.
\newblock In \emph{International Conference on Artificial Intelligence and
  Statistics}, pages 4346--4356. PMLR, 2020.

\bibitem[Lei and Wasserman(2014)]{lei2014distribution}
Jing Lei and Larry Wasserman.
\newblock Distribution-free prediction bands for non-parametric regression.
\newblock \emph{Journal of the Royal Statistical Society: Series B: Statistical
  Methodology}, pages 71--96, 2014.

\bibitem[Lei et~al.(2015)Lei, Rinaldo, and Wasserman]{lei2015conformal}
Jing Lei, Alessandro Rinaldo, and Larry Wasserman.
\newblock A conformal prediction approach to explore functional data.
\newblock \emph{Annals of Mathematics and Artificial Intelligence}, 74\penalty0
  (1):\penalty0 29--43, 2015.

\bibitem[Lei et~al.(2018)Lei, G’Sell, Rinaldo, Tibshirani, and
  Wasserman]{lei2018distribution}
Jing Lei, Max G’Sell, Alessandro Rinaldo, Ryan~J Tibshirani, and Larry
  Wasserman.
\newblock Distribution-free predictive inference for regression.
\newblock \emph{Journal of the American Statistical Association}, 113\penalty0
  (523):\penalty0 1094--1111, 2018.

\bibitem[Lei and Cand{\`e}s(2020)]{lei2020conformal}
Lihua Lei and Emmanuel~J Cand{\`e}s.
\newblock Conformal inference of counterfactuals and individual treatment
  effects.
\newblock \emph{arXiv preprint arXiv:2006.06138}, 2020.

\bibitem[Lin et~al.(2021)Lin, Trivedi, and Sun]{lin2021locally}
Zhen Lin, Shubhendu Trivedi, and Jimeng Sun.
\newblock Locally valid and discriminative prediction intervals for deep
  learning models.
\newblock \emph{Advances in Neural Information Processing Systems}, 34, 2021.

\bibitem[Lin et~al.(2022)Lin, Trivedi, and Sun]{lin2022conformal}
Zhen Lin, Shubhendu Trivedi, and Jimeng Sun.
\newblock Conformal prediction with temporal quantile adjustments.
\newblock \emph{arXiv preprint arXiv:2205.09940}, 2022.

\bibitem[Linusson et~al.(2014)Linusson, Johansson, and
  L{\"o}fstr{\"o}m]{linusson2014signed}
Henrik Linusson, Ulf Johansson, and Tuve L{\"o}fstr{\"o}m.
\newblock Signed-error conformal regression.
\newblock In \emph{Advances in Knowledge Discovery and Data Mining: 18th
  Pacific-Asia Conference, PAKDD 2014, Tainan, Taiwan, May 13-16, 2014.
  Proceedings, Part I 18}, pages 224--236. Springer, 2014.

\bibitem[Makili et~al.(2012)Makili, S{\'a}nchez, and
  Dormido-Canto]{makili2012active}
L{\'a}zaro~Em{\'\i}lio Makili, Jes{\'u}s A~Vega S{\'a}nchez, and Sebasti{\'a}n
  Dormido-Canto.
\newblock Active learning using conformal predictors: application to image
  classification.
\newblock \emph{Fusion Science and Technology}, 62\penalty0 (2):\penalty0
  347--355, 2012.

\bibitem[Matiz and Barner(2019)]{matiz2019inductive}
Sergio Matiz and Kenneth~E Barner.
\newblock Inductive conformal predictor for convolutional neural networks:
  Applications to active learning for image classification.
\newblock \emph{Pattern Recognition}, 90:\penalty0 172--182, 2019.

\bibitem[Meinshausen and Ridgeway(2006)]{meinshausen2006quantile}
Nicolai Meinshausen and Greg Ridgeway.
\newblock Quantile regression forests.
\newblock \emph{Journal of Machine Learning Research}, 7\penalty0 (6), 2006.

\bibitem[Nadaraya(1964)]{nadaraya1964estimating}
Elizbar~A Nadaraya.
\newblock On estimating regression.
\newblock \emph{Theory of Probability \& Its Applications}, 9\penalty0
  (1):\penalty0 141--142, 1964.

\bibitem[Papadopoulos et~al.(2002)Papadopoulos, Proedrou, Vovk, and
  Gammerman]{papadopoulos2002inductive}
Harris Papadopoulos, Kostas Proedrou, Volodya Vovk, and Alex Gammerman.
\newblock Inductive confidence machines for regression.
\newblock In \emph{European Conference on Machine Learning}, pages 345--356.
  Springer, 2002.

\bibitem[Romano et~al.(2019)Romano, Patterson, and
  Cand{\`e}s]{romano2019conformalized}
Yaniv Romano, Evan Patterson, and Emmanuel~J Cand{\`e}s.
\newblock Conformalized quantile regression.
\newblock \emph{arXiv preprint arXiv:1905.03222}, 2019.

\bibitem[Romano et~al.(2020)Romano, Sesia, and
  Candes]{romano2020classification}
Yaniv Romano, Matteo Sesia, and Emmanuel Candes.
\newblock Classification with valid and adaptive coverage.
\newblock \emph{Advances in Neural Information Processing Systems},
  33:\penalty0 3581--3591, 2020.

\bibitem[Rothe et~al.(2015)Rothe, Timofte, and Gool]{Rothe-ICCVW-2015}
Rasmus Rothe, Radu Timofte, and Luc~Van Gool.
\newblock Dex: Deep expectation of apparent age from a single image.
\newblock In \emph{IEEE International Conference on Computer Vision Workshops
  (ICCVW)}, December 2015.

\bibitem[Rothe et~al.(2018)Rothe, Timofte, and Gool]{Rothe-IJCV-2018}
Rasmus Rothe, Radu Timofte, and Luc~Van Gool.
\newblock Deep expectation of real and apparent age from a single image without
  facial landmarks.
\newblock \emph{International Journal of Computer Vision}, 126\penalty0
  (2-4):\penalty0 144--157, 2018.

\bibitem[Sesia and Romano(2021)]{sesia2021conformal}
Matteo Sesia and Yaniv Romano.
\newblock Conformal prediction using conditional histograms.
\newblock \emph{Advances in Neural Information Processing Systems},
  34:\penalty0 6304--6315, 2021.

\bibitem[Shafer and Vovk(2008)]{shafer2008tutorial}
Glenn Shafer and Vladimir Vovk.
\newblock A tutorial on conformal prediction.
\newblock \emph{Journal of Machine Learning Research}, 9\penalty0 (3), 2008.

\bibitem[Stutz et~al.(2022)Stutz, Dvijotham, Cemgil, and
  Doucet]{stutz2022learning}
David Stutz, Krishnamurthy~Dj Dvijotham, Ali~Taylan Cemgil, and Arnaud Doucet.
\newblock Learning optimal conformal classifiers.
\newblock In \emph{International Conference on Learning Representations}, 2022.

\bibitem[Taylor(2000)]{taylor2000quantile}
James~W Taylor.
\newblock A quantile regression neural network approach to estimating the
  conditional density of multiperiod returns.
\newblock \emph{Journal of Forecasting}, 19\penalty0 (4):\penalty0 299--311,
  2000.

\bibitem[Teng et~al.(2021)Teng, Tan, and Yuan]{teng2021t}
Jiaye Teng, Zeren Tan, and Yang Yuan.
\newblock T-sci: A two-stage conformal inference algorithm with guaranteed
  coverage for cox-mlp.
\newblock In \emph{International Conference on Machine Learning}, pages
  10203--10213. PMLR, 2021.

\bibitem[Tibshirani and Foygel(2019)]{tibshirani2019conformal}
Ryan Tibshirani and Rina Foygel.
\newblock Conformal prediction under covariate shift.
\newblock \emph{Advances in neural information processing systems}, 2019.

\bibitem[Volkhonskiy et~al.(2017)Volkhonskiy, Burnaev, Nouretdinov, Gammerman,
  and Vovk]{volkhonskiy2017inductive}
Denis Volkhonskiy, Evgeny Burnaev, Ilia Nouretdinov, Alexander Gammerman, and
  Vladimir Vovk.
\newblock Inductive conformal martingales for change-point detection.
\newblock In \emph{Conformal and Probabilistic Prediction and Applications},
  pages 132--153. PMLR, 2017.

\bibitem[Vovk(2012)]{vovk2012conditional}
Vladimir Vovk.
\newblock Conditional validity of inductive conformal predictors.
\newblock In \emph{Asian conference on machine learning}, pages 475--490. PMLR,
  2012.

\bibitem[Vovk et~al.(2005)Vovk, Gammerman, and Shafer]{vovk2005algorithmic}
Vladimir Vovk, Alex Gammerman, and Glenn Shafer.
\newblock \emph{Algorithmic learning in a random world}.
\newblock Springer Science \& Business Media, 2005.

\bibitem[Watson(1964)]{watson1964smooth}
Geoffrey~S Watson.
\newblock Smooth regression analysis.
\newblock \emph{Sankhy{\=a}: The Indian Journal of Statistics, Series A}, pages
  359--372, 1964.

\bibitem[Zaffran et~al.(2022)Zaffran, F{\'e}ron, Goude, Josse, and
  Dieuleveut]{zaffran2022adaptive}
Margaux Zaffran, Olivier F{\'e}ron, Yannig Goude, Julie Josse, and Aymeric
  Dieuleveut.
\newblock Adaptive conformal predictions for time series.
\newblock In \emph{International Conference on Machine Learning}, pages
  25834--25866. PMLR, 2022.

\bibitem[Zaikarina et~al.(2016)Zaikarina, Djuraidah, and
  Wigena]{zaikarina2016lasso}
Hilda Zaikarina, Anik Djuraidah, and Aji~Hamim Wigena.
\newblock Lasso and ridge quantile regression using cross validation to
  estimate extreme rainfall.
\newblock \emph{Global Journal of Pure and Applied Mathematics}, 12\penalty0
  (3):\penalty0 3305--3314, 2016.

\bibitem[Zhifei et~al.(2017)Zhifei, Yang, and Hairong]{zhifei2017cvpr}
Zhang Zhifei, Song Yang, and Qi~Hairong.
\newblock Age progression/regression by conditional adversarial autoencoder.
\newblock In \emph{IEEE Conference on Computer Vision and Pattern Recognition
  (CVPR)}. IEEE, 2017.

\end{thebibliography}

\onecolumn
\appendix
\begin{center}
\Large
\textbf{Supplementary Material for \\ Split Localized Conformal Prediction}
\end{center}

\section{Asymptotic Conditional Coverage}
\begin{pro}
As $n\rightarrow \infty$, the PI in Eq. \eqref{eqn:SLCP_band} achieves asymptotic conditional coverage:
\begin{equation}
    \P(Y_{n+1}\in \hat{\C}^{\rm SLCP}(X_{n+1})|X_{n+1}) \geq \alpha.
\end{equation}
\end{pro}

\begin{proof}
Denote $\hat{p}_h(x)$ as the empirical distribution estimated by $K(\|X_i -  X\| / h)$, $p(x)$ as the original distribution, and $\E[\hat{p}_h(x)] = p_h(x)$. Assume $X$ has $d$-dimensional features and $h$ is the bandwidth of kernel density estimation. Let $L$ be positive numbers and $\beta$ be positive integers. Given a vector $s=(s_1, \dots, s_d)$, define $|s|=s_1+\dots+s_d$, $x^s=x_1^{s_1} \cdots x_d^{s_d}$ and $D^s = \frac{\partial^{s_1+\dots+s_d}}{\partial x_1^{s_1} \cdots \partial x_d^{s_d}}$. Define the H$\mathrm{\ddot{o}}$lder class as 
\begin{equation}
    \Sigma(\beta, L) = \left\{ g: |D^s g(x) - D^s g(y)| \leq L \|x-y\|, \mathrm{\forall s ~such~that~|s|=\beta-1, ~and~all~}x,y \right\}.
\end{equation}
We first bound the bias of $\hat{p}_h(x)$:
\begin{align}
    |p_h(x)-p(x)| &= \int \frac{1}{h^d} K(\|u-x\|/h) p(u)du - p(x) \notag \\
    &= \left| \int K(\|v\|) (p(x+hv) - p(x)) dv\right| \notag \\
    &\leq \left| \int K(\|v\|) (p(x+hv) - p_{x, \beta}(x+hv))dv\right|+\left| \int K(\|v\|)(p_{x, \beta}(x+hv) - p(x))dv\right|. \notag
\end{align}
The first term is bounded by $Lh^{\beta}\int K(s)|s|^{\beta}$ since $p\in \Sigma(\beta, L)$. The second term is $0$ from the properties on $K$ since $p_{x, \beta}(x+hv) - p(x)$ is a polynomial of degree $\beta$ (without constant term). Then for some $c$, the bias of $\hat{p}_h(x)$ satisfies
\begin{equation}
    \underset{p\in \Sigma(\beta, L)}{\sup}~ |p_h(x) - p(x)| \leq ch^{\beta}.
    \label{eq:bias}
\end{equation}
We write $\hat{p}(x)=n^{-1} \sum_{i=1}^n Z_i$, where $Z_i = \frac{1}{h^d}K\left(\frac{\|x-X_i\|}{h} \right).$ Then
\begin{align}
    \mathrm{Var}(Z_i) &\leq \E(Z_i^2) = \frac{1}{h^2d} \int K^2\left( \frac{\|x-u\|}{h}\right)p(u) du = \frac{h^d}{h^{2d}} \int K^2(\|v\|) p(x+hv) dv \notag \\
    &\leq \frac{\sup_x p(x)}{h^d} \int K^2(\|v\|) dv \leq \frac{c}{h^d}, \notag
\end{align}
for some $c > 0$, since the densities in $\Sigma(\beta, L)$ are uniformly bounded. Then the variance of $\hat{p}_h(x)$ satisfies
\begin{equation}
\underset{p\in \Sigma(\beta, L)}{\sup} \mathrm{Var}(\hat{p}_h(x)) \leq \frac{c}{nh^d}.
\label{eq:var}
\end{equation}
Combining Eq. \eqref{eq:bias} and \eqref{eq:var}, if $h\asymp n^{-1/(2\beta+d)}$ then we have
\begin{equation}
    \underset{p\in \Sigma(\beta, L)}{\sup} \E\int (\hat{p}_h(x) - p(x))^2 dx \preceq \left( \frac{1}{n}\right)^{\frac{2\beta}{2\beta+d}}.
\end{equation}
Plug into the approximation of conditional CDF function $\hat{F}_h$, we have
\begin{align}
    \E \int \left[\hat{F}_h(V|X=x)-F(V|X=x) \right]^2 dx &= \sum_{i \in \mathcal{I}_{\mathrm{train}}}\left[\omega_h(X_i|x) - \omega(X_i|x))\right]^2 \notag  \\
    &\leq  \underset{p\in \Sigma(\beta, L)}{\sup} \E\int (\hat{p}_h(x) - p(x))^2 dx \notag \\
    &\preceq \left( \frac{1}{n}\right)^{\frac{2\beta}{2\beta+d}},
\end{align}
which means if a sufficient amount of data is available, i.e. $n\to \infty$, our non-parametric estimation of conditional distribution $\hat{F}_h(V|X=x)$ will converge to the true conditional distribution $F(V|X=x)$. In this case, SLCP will provide a new non-conformity score $V^{\alpha,h} \approx V(x,y) - q_{v,\alpha}(x)$,
where $q_{v,\alpha}(x)$ denotes the true quantile function $q_{v,\alpha}(x) = \inf\{v: F(v|X=x)\geq \alpha\}$.
Therefore, $\P(V^{\alpha,h}\geq 0|X=x) \approx \alpha$ is uniform for all $x$. The marginal quantile correction of $V^{\alpha,h}$ will be close to $0$ and provide an asymptotic conditional coverage guarantee.
\end{proof}

\section{Discussion on the Generalized Framework}
\label{sec:appendix_generalize_framework}

\begin{thm}[Generalized SLCP]
    Suppose $m(v, F, \cdot)$ is a score function with its inverse function $p(v,F,\cdot)$. 
    Let $\theta_i = p(V_i, \hat{F}_i, 0)$, and suppose $\theta^* = \Q(\alpha, \{\theta_i\}\cup \{\infty\})$
    we have:
    \begin{equation}
        \Pr\{m(V_{n+1}, \hat{F}_{n+1}, \theta^*) \leq 0\} \geq \alpha.
    \end{equation}
    Moreover, let $\hat{\C}(x) = \{y: m(v, \hat{F}, \theta^*)\leq 0\}$, where $v$ is the non-conformity score for $V(x,y)$, and $\hat{F}$ is the approximate CDF conditioned on $x$, we have:
    \begin{equation}
        \Pr\{Y_{n+1} \in \hat{\C}(X_{n+1})\} \geq \alpha.
    \end{equation}
\end{thm}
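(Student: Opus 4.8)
The plan is to reduce Theorem~\ref{thm:generalized_SLCP} to the quantile lemma in Eq.~\eqref{eqn:quantile_lemma} applied to the scalar sequence $\{\theta_i\}_{i=1}^{n+1}$, where $\theta_i = p(V_i, \hat{F}_i, 0)$. The crucial observation is that the tuples $(V_i, \hat{F}_i)$ are exchangeable: under the SLCP construction $\hat{F}_i = \hat{F}_h(V\mid X=X_i)$ is computed from the \emph{training} set (which is held fixed and disjoint from the calibration set), so conditional on $\I_{train}$ the pairs $(X_i,Y_i)$ for $i\in\I_{cal}\cup\{n+1\}$ are i.i.d., and hence $\theta_i = p(V(X_i,Y_i), \hat{F}_h(V\mid X=X_i), 0)$ is a fixed measurable function applied to each i.i.d.\ draw. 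Therefore $\{\theta_i\}_{i\in\I_{cal}\cup\{n+1\}}$ is exchangeable, and the quantile lemma gives $\Pr\{\theta_{n+1} \le \Q(\alpha,\{\theta_i\}_{i\in\I_{cal}}\cup\{\infty\})\} = \Pr\{\theta_{n+1}\le\theta^*\}\ge\alpha$ (the padding with $\{\infty\}$ accounts for the $n+1$-st element and makes the bound hold exactly as in split conformal).

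First I would state the exchangeability claim precisely and note it follows from the data-splitting design plus the remark after Proposition~\ref{pro:average_guarantee_SLCP}; all randomness in $\hat F_i$ comes through $X_i$ alone once $\I_{train}$ is frozen. Second, I would invoke the quantile lemma on $\{\theta_i\}$ to obtain $\Pr\{\theta_{n+1}\le\theta^*\}\ge\alpha$. Third, I would translate the event $\{\theta_{n+1}\le\theta^*\}$ into $\{m(V_{n+1},\hat F_{n+1},\theta^*)\le 0\}$ using the monotonicity in Assumption~\ref{ass:monotonic}: since $\theta_{n+1} = p(V_{n+1},\hat F_{n+1},0) = \inf\{\theta: m(V_{n+1},\hat F_{n+1},\theta)\ge 0\}$, for any $\theta^* \ge \theta_{n+1}$ monotonicity of $m(v,F,\cdot)$ forces $m(V_{n+1},\hat F_{n+1},\theta^*)\ge 0$ --- wait, this is the \emph{wrong} direction, so I must instead be careful about the sign convention: the theorem wants $m\le 0$, so the relevant inverse and monotonicity must be set up so that $\theta\le\theta_{n+1}\iff m(V_{n+1},\hat F_{n+1},\theta)\le 0$, i.e.\ the event of interest is $\{\theta^* \le \theta_{n+1}\}$, or equivalently the quantile lemma should be applied with the appropriate tail. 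I would reconcile this by noting that for SLCP's concrete score $m(v,F,\theta)=\Q(\alpha,F)+\theta-v$ we have $\theta_i = V_i - \Q(\alpha,\hat F_i) = V_i^{\alpha,h}$ and $\{m(V_{n+1},\hat F_{n+1},\theta^*)\le 0\} = \{V_{n+1}^{\alpha,h}\le\theta^*\} = \{\theta_{n+1}\le\theta^*\}$, which \emph{is} the quantile-lemma event --- so the sign works out provided one reads $\theta^* = \Q(\alpha,\cdot)$ as an upper quantile and $p$ as defined; I would carry out this check once in the general notation to confirm $\{m(V_{n+1},\hat F_{n+1},\theta)\le 0\}=\{\theta_{n+1}\le\theta\}$ holds as an identity of events from the definition of $p$ together with right-continuity/monotonicity of $m$.

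For the second conclusion, I would define $\hat\C(x) = \{y: m(V(x,y),\hat F_h(V\mid X=x),\theta^*)\le 0\}$ and simply observe that $Y_{n+1}\in\hat\C(X_{n+1})$ is \emph{by definition} the event $\{m(V(X_{n+1},Y_{n+1}),\hat F_{n+1},\theta^*)\le 0\} = \{m(V_{n+1},\hat F_{n+1},\theta^*)\le 0\}$, so the coverage bound transfers verbatim.

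The main obstacle I expect is not any deep estimate --- there is none --- but getting the \textbf{sign/direction conventions} in Assumption~\ref{ass:monotonic} exactly consistent with the quantile lemma, so that $\theta^*=\Q(\alpha,\{\theta_i\}\cup\{\infty\})$ is genuinely the right cutoff and the event $\{m\le 0\}$ coincides with a one-sided event on $\theta_{n+1}$. A secondary subtlety worth a sentence is justifying exchangeability cleanly: one must emphasize that $\hat F_h(\cdot\mid X=x)$ depends only on $\I_{train}$, which is why no circular dependence arises and the $\theta_i$ over $\I_{cal}\cup\{n+1\}$ are a genuine i.i.d.\ (hence exchangeable) sample --- this is exactly the point flagged in the remark following Proposition~\ref{pro:average_guarantee_SLCP}. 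Everything else is bookkeeping.
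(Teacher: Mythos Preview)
Your plan—showing that the $\theta_i$ are exchangeable (each is a fixed measurable function of the i.i.d.\ pair $(X_i,Y_i)$ once $\I_{train}$ is frozen), applying the quantile lemma directly to $\{\theta_i\}$, and then translating the resulting event via monotonicity—is sound and is in fact more direct than the paper's argument. The paper takes a slightly different route: it conditions on the realized calibration values, applies the quantile lemma to the sequence $M_i^\theta := m(V_i,\hat F_i,\theta)$ for a \emph{fixed} $\theta$, obtaining $\Pr\{M_{n+1}^\theta \le g(\theta)\}\ge\alpha$ with $g(\theta)=\Q(\alpha,\{m_i^\theta\}\cup\{\infty\})$, and then argues separately (from the definition of $\theta^*$ and monotonicity) that $g(\theta^*)\le 0$, whence $\Pr\{M_{n+1}^{\theta^*}\le 0\}\ge\alpha$. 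Your route avoids introducing the auxiliary function $g$; the paper's route avoids having to identify $\{m\le 0\}$ with a one-sided event on $\theta_{n+1}$.

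One genuine slip in your verification: with $m(v,F,\theta)=\Q(\alpha,F)+\theta-v$, the event $\{m(V_{n+1},\hat F_{n+1},\theta^*)\le 0\}$ rearranges to $\{V_{n+1}-\Q(\alpha,\hat F_{n+1})\ge\theta^*\}=\{\theta_{n+1}\ge\theta^*\}$, \emph{not} $\{\theta_{n+1}\le\theta^*\}$ as you wrote. So the direction problem you correctly flagged is not actually resolved by your concrete check; your first instinct (``wait, this is the wrong direction'') was right. To be fair, the paper's own sign bookkeeping is equally loose here—its claim that ``at least $\alpha$-quantile of $m_i^{\theta^*}\le 0$'' suffers the same inversion, since $\theta_i\le\theta^*$ with $m$ increasing gives $m_i^{\theta^*}\ge m_i^{\theta_i}=0$, not $\le 0$—so this is a defect of the stated conventions in Assumption~\ref{ass:monotonic} and the theorem rather than of your strategy. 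Once the inequality directions in the theorem and the assumption are made mutually consistent (either flip $m\le 0$ to $m\ge 0$, or reverse the monotonicity, or take the $(1-\alpha)$-quantile for $\theta^*$), your one-line quantile-lemma argument on $\{\theta_i\}$ goes through cleanly; the paper's indirect route faces exactly the same hurdle.
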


\begin{proof}
    Suppose $E_v$ is the event when $\{V_1, V_2,\ldots, V_{n}\} = \{v_1, v_2,\ldots, v_{n}\}$, here we assume $v_i$ is unique almost surely.
    Given $\theta$, denote $M_i^{\theta} = m(V_i, \hat{F}_i, \theta)$ and $m_i^{\theta}$ as its realization when $V_i = v_i$.
    By Quantile Lemma, we have:
    \begin{align*}
        \Pr[M_{n+1}^{\theta}\leq \Q(\alpha, \{m_{1:n}^{\theta}\}\cup \{\infty\})|E_v, \theta] \geq \alpha.
    \end{align*}

    Let $g(\theta) = \Q(\alpha, \{m_{1:n}^{\theta}\}\cup \{\infty\})$.
    By definition of $\theta^*$, $\theta^*$ is less or equal to at least the $\alpha$-quantile of all $\theta_i$ (plus $\{\infty\}$), by the monotonicity, and the definition of $\theta_i$ that $m_i^{\theta_i} = 0$, we know at least $\alpha$-quantile of $m_i^{\theta^*}\leq 0$.
    Therefore we have $g(\theta^*) \leq 0$.
    
    Since $\theta^*$ depends on the value of $\theta_i$ so $\theta^*$ is fixed given event $E_v$, thus
    $$
    \Pr\{M_{n+1}^{\theta^*} \leq g(\theta^*)\leq 0|E_v\} \geq \alpha.
    $$
    Marginalized for every possible event $E_v$, we have
    $$
    \Pr\{m(V_{n+1}, \hat{F}_{n+1}, \theta^*) \leq 0\} \geq \alpha.
    $$
\end{proof}

\paragraph{Other Options of Score Function $m$}
We list a number of potential monotonic score functions and their inverse functions.
Readers are encouraged to build their own score functions for different applications.
\begin{enumerate}
    \item Additive with Quantile (our method, SLCP): $m(v,F,\theta) = \Q(\alpha,F) + \theta - v$. Inversion: $p(v,F,0) = v - \Q(\alpha,F)$.
    \item Multiplicative with Quantile: $m(v,F,\theta) = \theta(\varepsilon+|\Q(\alpha,F)|) - v , \theta > 0$. Inversion: $p(v,F,0) = \frac{v}{(\varepsilon+|\Q(\alpha,F)|)}$.
    \item Additive with Expectation: $m(v,F,\theta) = \E_F[V] + \theta - v$. Inversion: $p(v,F,0) = v - \E_F[V]$.
    \item $\theta$-Quantile~(Guan, 2019):
    $m(v,F,\theta) = Q(\theta, F) - v$. Inversion: $p(v,F,0) = \Pr_F[V<v]$.
\end{enumerate}

\section{Additional Analysis on Different Scenarios}

\paragraph{When $V$ is Independent to $X$}
If $V$ is (approximately) independent of $X$, 
then split conformal prediction can also provide the conditional guarantee as we discussed in Section 3.1.
However, the use of SLCP does not hurt the independence. Since each $V(x,y)$ is not dependent on $X=x$,
the weighted average of $V$ is also independent to $X$. Therefore, the quantile value of $\hat{F}_h$ is also independent to $X$.
To summarize, the new score $V^{\alpha,h}$ is independent to $X$ no matter what kernel or bandwidth we are using; similar to split conformal prediction, the final prediction intervals obtained also satisfies conditional guarantee.

\paragraph{When $V$ is Residual from True Quantiles.}
Suppose quantile regression is used to estimate the conditional quantile model $q_{Y,\alpha}(X)$
and a precise model is obtained to construct $V = Y - q_{Y,\alpha}(X)$, we will have the following results.
\begin{thm}\label{thm:doubly_robust}
Denote $F_h(v|X=x)$ as the population version of $\hat{F}_h$,
$$
 F_h(v|X=x) = \E[w_h(X|x)\mathds{1}_{V\leq v}],
$$
where $w_h(X|x) = \frac{K(\|X-x\|/h)}{\E_X K(\|X-x\|/h)}$.
If $V = Y - q_{Y,\alpha}(X)$
we will have
$$
    \Q(\alpha, F_h(V|X=x)) = 0
$$
\end{thm}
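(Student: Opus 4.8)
The plan is to reduce the claim to two elementary facts: the defining property of a correctly specified conditional quantile, and the fact that the localization weights integrate to one; together these give the stated identity for \emph{every} kernel $K$ and bandwidth $h$, which is the ``double robustness'' flavor of the result.

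First I would record the key identity. Since $q_{Y,\alpha}$ is the true $\alpha$-conditional quantile of $Y$ given $X$, for every $x'$ in the support we have $\P(Y \leq q_{Y,\alpha}(x') | X = x') = \alpha$, and hence, writing $F(\cdot | X = x')$ for the conditional CDF of $V = Y - q_{Y,\alpha}(X)$ given $X = x'$,
\[
    F(0 | X = x') = \P(V \leq 0 | X = x') = \alpha \qquad \text{uniformly in } x'.
\]

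Next I would evaluate the population localized CDF at $v = 0$ by conditioning on $X$ inside the expectation. Since $w_h(X|x)$ is a deterministic function of $X$, the tower property gives
\begin{align*}
    F_h(0 | X = x)
    &= \E\big[w_h(X|x)\, \mathds{1}_{V \leq 0}\big]
     = \E_X\big[w_h(X|x)\, \E[\mathds{1}_{V \leq 0} | X]\big] \\
    &= \E_X\big[w_h(X|x)\, F(0 | X)\big]
     = \alpha \, \E_X[w_h(X|x)].
\end{align*}
Because the denominator $\E_{X'} K(\|X' - x\|/h)$ in the definition of $w_h$ is a constant with respect to the outer expectation, $\E_X[w_h(X|x)] = \E_X K(\|X - x\|/h) \big/ \E_{X'} K(\|X' - x\|/h) = 1$ for any $x$ at which this denominator is finite and strictly positive, as holds for the standard kernels considered. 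Therefore $F_h(0 | X = x) = \alpha$, and taking the generalized inverse yields $\Q(\alpha, F_h(V | X = x)) = \inf\{v : F_h(v | X = x) \geq \alpha\} = 0$.

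The computation is short; the only step needing care is this last equality for the quantile. The identity $F_h(0|X=x) = \alpha$ already forces $\Q(\alpha,\cdot) \leq 0$, so I expect the one minor remaining obstacle to be ruling out $\Q(\alpha,\cdot) < 0$. For that I would invoke the standard regularity assumption that the true conditional quantile is unique, so that $F(v|X=x') < \alpha$ for $v < 0$ on a set of positive $w_h$-weighted probability; propagating this through the same tower-property computation gives $F_h(v|X=x) < \alpha$ for all $v < 0$. No further assumption on $K$ or $h$ enters, which is precisely the content of the theorem.
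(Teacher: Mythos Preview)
Your argument is correct and follows essentially the same route as the paper: both show $F_h(0\mid X=x)=\alpha$ by conditioning on $X$ (tower property) and using that the localization weights integrate to one, then read off the quantile. Your treatment is in fact slightly more careful, since you explicitly address why the infimum equals $0$ rather than being strictly negative, a point the paper's proof leaves implicit.
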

\begin{proof}
    If $V = Y - q_{Y,\alpha}(X)$, then 
    \begin{align*}
        \Pr[Y-q_{Y,\alpha}(x)\leq 0|X=x] = \alpha,
    \end{align*}
    which implies:
    \begin{align*}
        \int_{\X} p_X(z) w_h(z|x)\Pr[Y-q_{Y, \alpha}(x)\leq 0|X=x] d z = \alpha.
    \end{align*}
    By the definition of quantile we can see that the minimum value of $v$ that satisfies $\Pr[V = Y-q_{Y,\alpha}(X)\leq v]$
    under the distribution $F_h$ is $0$, therefore
    $$
    \Q(\alpha, F_h(V|X=x)) = 0.
    $$
\end{proof}

This is because the probability of $V$ is greater than $0$ is exactly $\alpha$ for any $V$.
Therefore, $0$ is the exact $\alpha$-quantile for any $V|X=x$, 
and also the $\alpha$-quantile for $F_h$. This indicates the empirical quantile is close to $0$,
and therefore $V^{\alpha,h} \approx V$, which satisfies the conditional coverage guarantee.

\section{Ablation Studies}
\label{sec:ablation}
We conduct additional ablation studies on SLCP baselines by varying the representation power of neural network models as well as the degree of covariate shift in the test set.

\paragraph{Model Capacity} We change the hidden size of neural network models and measure the average length of predictive intervals across all datasets. Figure \ref{fig:ablation} (a) shows that when using neural network with small hidden size, SLCP (quantile estimator) has a clear advantage over split conformal and CQR. After the hidden size is above 32, the performance of CQR has almost no difference with SLCP. The change of hidden size affects the results of vanilla split conformal prediction similarly, with a larger average length in general. 

\paragraph{Covariate Shift} The change of distribution in certain features is likely to hamper conformal predictions. In the ideal case when the conditional coverage is perfectly achieved, covariate shift does not affect the performance. But in real situation, the regression models used to approximate conditional coverage cannot fully capture the data distribution. We conduct a simulation where the training and testing data are drawn from $\mathrm{Gaussian}$ and $\mathrm{Beta}$ distributions respectively. We change the parameters of $\mathrm{Beta}$ distribution to control the level of covariate shift. Figure \ref{fig:ablation} (b) shows that SLCP is more robust to covariate shift, by adding localized information on top of regression models.

\paragraph{Mini-batch Size}
We conducted additional experiments on the effect of batch size on the performance of all SLCP methods. Results are shown in Table \ref{tab:batch}. The first three rows (SLCP-KNN, SLCP-RBF, and SLCP-mean) are SLCP performed on quantile estimators using boxcar kernel, SLCP performed on quantile estimators using Gaussian-RBF kernel, and SLCP performed on mean estimators with the Gaussian-RBF kernel. All results are evaluated on 16 tabular datasets averaged over 20 random experiments as in table 1. We also include results running on the IMDB-WIKI dataset from the age regression task (averaged across 5 random runs). It is true that a very small batch size (<10\% of training set size) will lead to an obvious performance drop given that a specific mini-batch is more likely to omit the neighboring examples. From the ablation study, we empirically find that using 30\% of the training set as mini-batch size can significantly reduce computation time without hampering too much on prediction intervals of SLCP.

\paragraph{Kernel Density Bandwidth}
The other ablation study is on the bandwidth of the Gaussian RBF kernel at different quantiles of pairwise distance. As mentioned in section 3.4, we choose the bandwidth as the median value of the pairwise distance, which is the 0.5 quantile in the table. As shown in Table \ref{tab:bandwidth}, this leads to better results compared with all other quantiles in both tabular and image datasets.

\begin{figure*}[t]
    \begin{minipage}{0.85\textwidth}
    \centering
    \begin{tabular}{@{\hspace{7ex}} c @{\hspace{-4ex}} c}
        \begin{tabular}{c}
        \includegraphics[width=.5\textwidth]{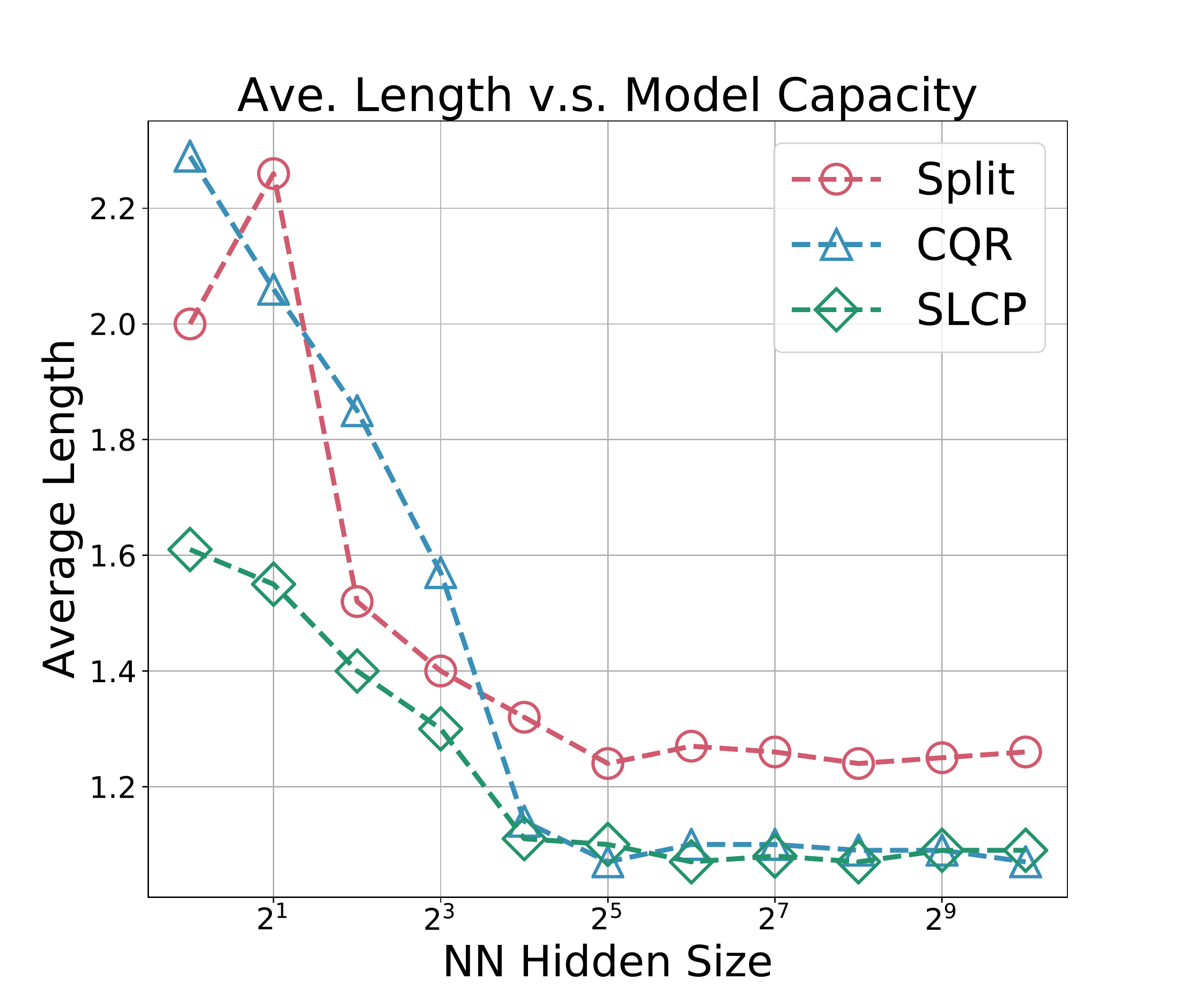}
        \\
        {\small{(a)}}
        \end{tabular} &
        \begin{tabular}{c}
        \includegraphics[width=.5\textwidth]{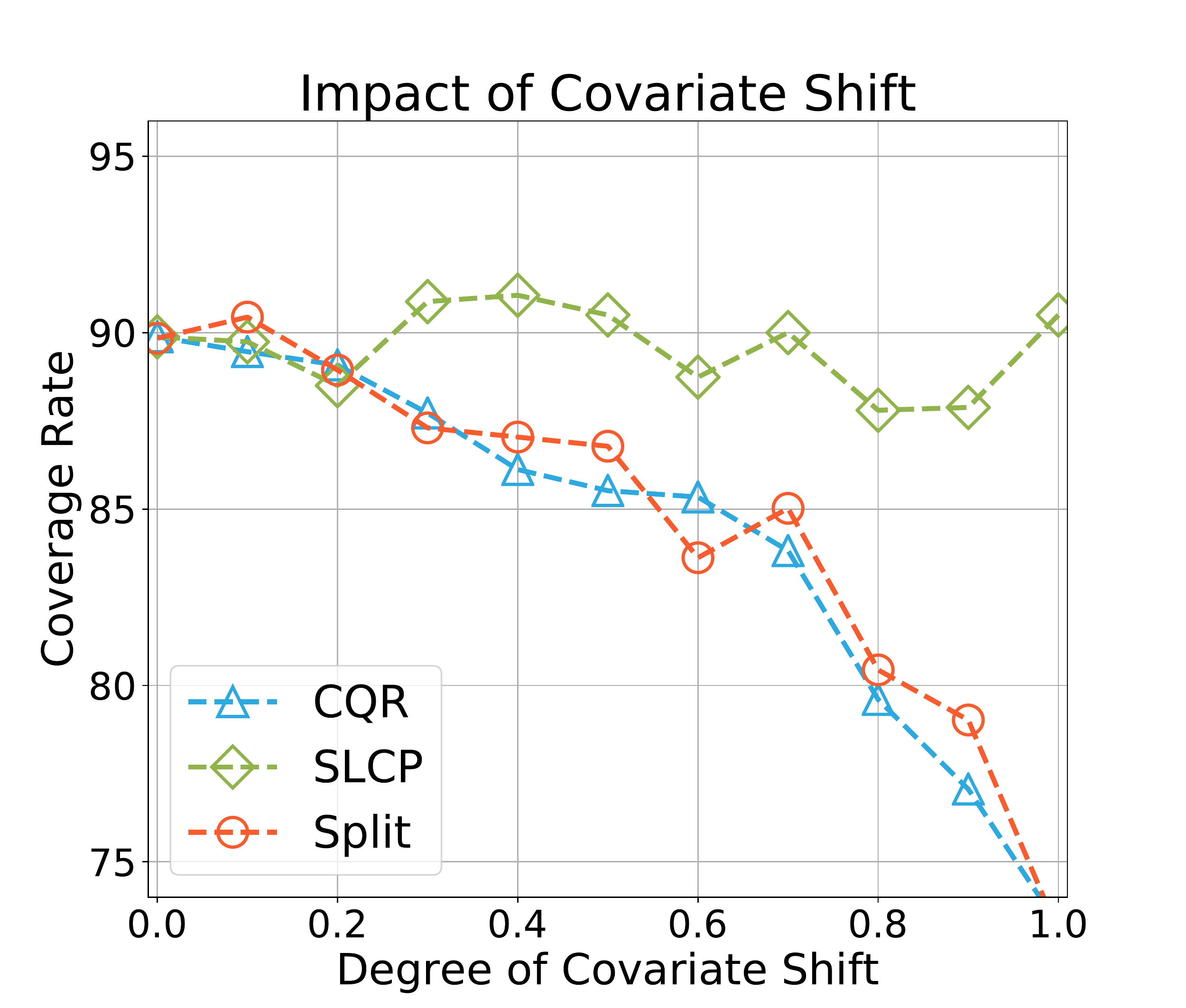}
        \\
        {\small{(b)}}
        \end{tabular} \\
        \end{tabular}
    \end{minipage}
    \vspace{-2ex}
    \caption{
    (a) Relationship between model capacity and average length of PIs. SLCP is more robust to model mis-specification. (b) Impact of testing data covariate shift on coverage rate of each method.}
    \label{fig:ablation}
\end{figure*}

\begin{table*}[t]
\centering
\renewcommand
\arraystretch{1.2}
\scalebox{0.95}{
\begin{tabular}{c|c|c|c|c|c|c|c}
\hlinewd{1.5pt}
& & \multicolumn{6}{c}{Batch Size (\% of training set)} \\ \hhline{~|~|-|-|-|-|-|-}
& & 10 & 30 & 50 & 70 & 90 & 100 \\ \hline
\multirow{2}{*}{ SLCP-KNN } & Ave. Length & 1.86 & 1.86 & 1.84 & 1.86 & 1.84 & 1.85 \\
& Ave. Coverage & 89.48 & 89.26 & 89.22 & 89.48 & 89.26 & 89.36 \\\hline
\multirow{2}{*}{ SLCP-RBF } & Ave. Length & 2.19 & 1.64 & 1.56 & 1.54 & 1.59 & 1.58 \\
& Ave. Coverage & 93.44 & 90.44 & 89.86 & 89.56 & 90.46 & 90.14 \\\hline
\multirow{2}{*}{ SLCP-mean } & Ave. Length & 2.24 & 2.17 & 2.16 & 2.18 & 2.18 & 2.17 \\
& Ave. Coverage & 91.16 & 90.74 & 90.5 & 91.06 & 91.02 & 90.76 \\\hline
\multirow{2}{*}{ Age Regression } & Ave. Length & 8.47 & 4.65 & 4.78 & 4.62 & 4.39 & 4.42 \\
& Ave. Coverage & 91.26 & 90.23 & 90.45 & 90.21 & 90.15 & 90.86 \\\hline
\multicolumn{2}{c}{Average Relative Run Time} & 1 & 3.46 & 5.76 & 8.29 & 11.09 & 12.42 \\
\hlinewd{1.5pt}
\end{tabular}}
\caption{Effect of mini-batch size on the performance of all SLCP methods.}
\label{tab:batch}
\end{table*}

\begin{table*}[t]
\centering
\renewcommand
\arraystretch{1.2}
\scalebox{0.98}{
\begin{tabular}{c|c|c|c|c|c|c|c}
\hlinewd{1.5pt}
& & \multicolumn{6}{c}{Bandwidth of RBF Kernel (quantile of pairwise distance)} \\ \hhline{~|~|-|-|-|-|-|-}
& & 0.1 & 0.3 & 0.5 & 0.7 & 0.9 & 1 \\ \hline
\multirow{2}{*}{ Tabular Data } & Ave. Length & 1.87 & 1.64 & \textbf{1.38} & 1.69 & 1.69 & 1.65 \\
& Ave. Coverage & 92.4 & 89.36 & 90.08 & 90.12 & 90.12 & 89.61 \\ \hline
\multirow{2}{*}{ Age Regressiom } & Ave. Length & 6.22 & 4.93 & \textbf{4.65} & 4.69 & 4.74 & 4.62 \\
& Ave. Coverage & 91.65 & 90.32 & 90.30 & 90.26 & 90.28 & 88.34 \\
\hlinewd{1.5pt}
\end{tabular}}
\caption{Ablation study on the bandwidth of the Gaussian RBF kernel at different quantiles of pairwise distance.}
\label{tab:bandwidth}
\end{table*}

\section{Dataset Information}
Below is the information for all tabular dataset (sample size, ambient dimension):
\begin{itemize}
    \item \texttt{Simulations} (7000, 1): simulation function 1: $y = (\sin{x})^2 + 0.6 \cdot \sin{2x} + \epsilon$; simulation function 2: $y = 2\cdot(\sin{x})^2 + 0.15 \cdot x \cdot \epsilon$; simulation function 3: $y = \mathrm{Pois}[(\sin{x})^2 + 0.1] + 0.08 \cdot x \cdot \epsilon + 25 \cdot \mathbbm{1}\{\varepsilon < 0.01\} + \epsilon$, where $\epsilon$ is a random variable that generates either $0$ or $1$, and $\varepsilon \sim \mathrm{Unif}[0, 1].$
    \item \href{https://archive.ics.uci.edu/ml/datasets/Bike+Sharing+Dataset}{\texttt{bike}} (10886, 12): UCI bike sharing dataset.
    \item \href{https://archive.ics.uci.edu/ml/datasets/Physicochemical+Properties+of+Protein+Tertiary+Structure}{\texttt{bio}} (45730, 10): UCI physicochemical properties of protein tertiary structure dataset.
    \item \href{https://www.rdocumentation.org/packages/AER/versions/1.2-6/topics/STAR}{\texttt{star}} (11598, 48): Tennessee’s Student Teacher Achievement Ratio (STAR) project.
    \item \href{http://archive.ics.uci.edu/ml/datasets/concrete+compressive+strength}{\texttt{concrete}} (1030, 9): concrete compressive strength dataset.
    \item \href{http://archive.ics.uci.edu/ml/datasets/communities+and+crime}{\texttt{community}} (1993, 128): UCI communities and crime data set.
    \item \href{https://meps.ahrq.gov/mepsweb/data_stats/download_data_files_detail.jsp?cboPufNumber=HC-181}{\texttt{meps}} (15785, 141): medical expenditure panel survey, including panel 19, 20 and 21. 
    \item \href{https://archive.ics.uci.edu/ml/datasets/Facebook+Comment+Volume+Dataset}{\texttt{facebook}} (40948, 54): UCI facebook comment volume data set.
    \item \href{http://archive.ics.uci.edu/ml/datasets/yacht+hydrodynamics}{\texttt{yacht}} (308, 7): UCI yacht hydrodynamics dataset.
    \item \href{https://archive.ics.uci.edu/ml/datasets/energy+efficiency}{\texttt{energy}} (768, 8): UCI energy efficiency dataset.
    \item \href{https://www.openml.org/search?type=data&sort=runs&id=189&status=active}{\texttt{kin8nm}} (8192, 9): Forward kinematics of an 8 link robot arm.
\end{itemize}

Age regression data:
\begin{itemize}
    \item \href{https://data.vision.ee.ethz.ch/cvl/rrothe/imdb-wiki/}{IMDB-WIKI}: this dataset contains 500k+ face images with age and gender labels. We use the face-only cropped image (8GB) for the experiment.
    \item \href{https://susanqq.github.io/UTKFace/}{UTK Face}: this dataset contains 20k+ face images with annotations of age (range from 0 to 116), gender, and ethnicity. We use the aligned \& cropped faces (107MB) for experiment.
\end{itemize}
All datasets are publicly available on the web.


\section{Additional Results} \label{sec:additional_results}
\begin{figure*}[t!]
    \begin{minipage}{\textwidth}
    \centering
    \begin{tabular}{@{\hspace{-2.6ex}}c @{\hspace{-2.6ex}} c @{\hspace{-2.6ex}} c}
        \begin{tabular}{c}
        \includegraphics[width=.32\textwidth]{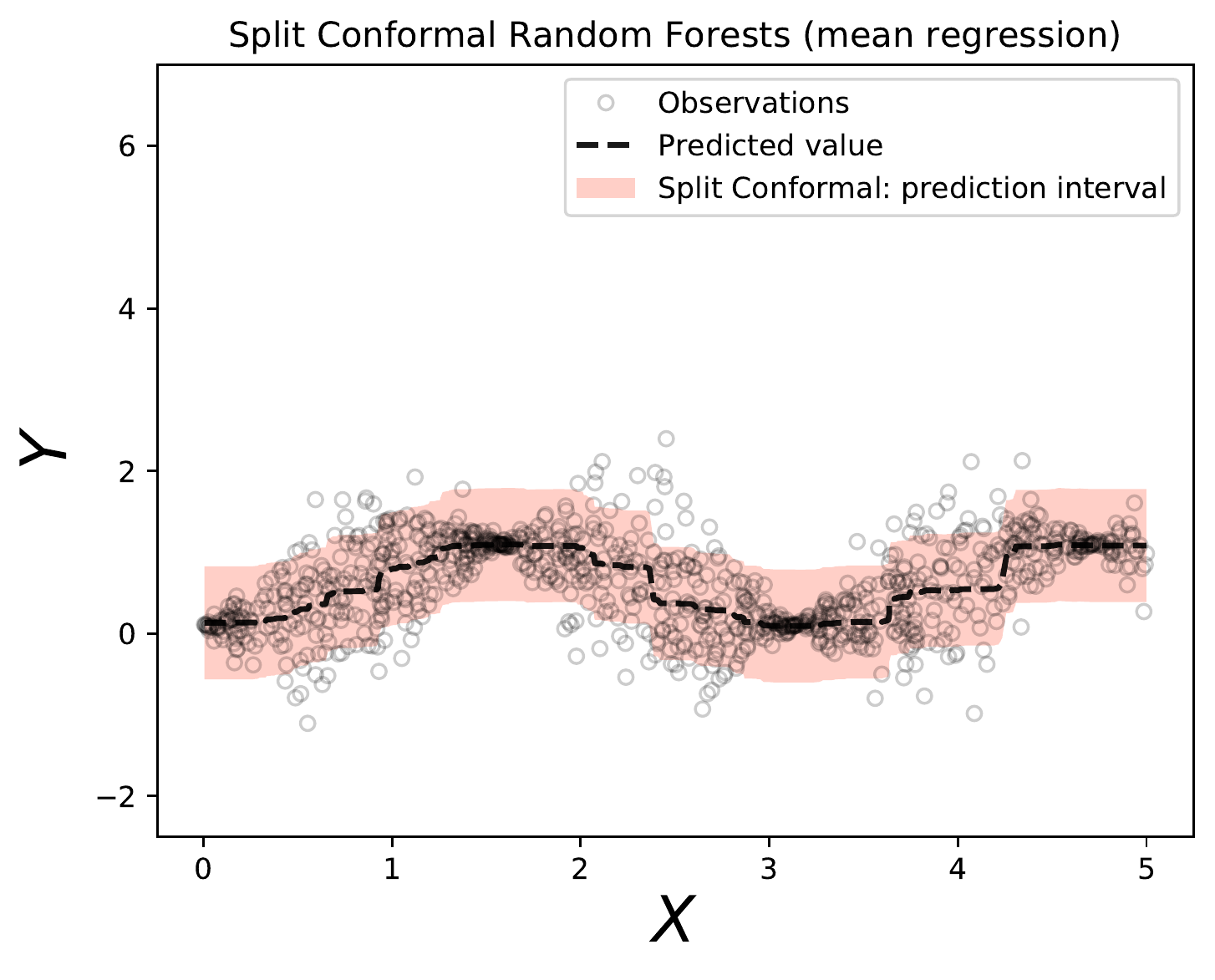}
        \\
        {\small{(a) Avg. cov. 90.03\%; Avg. len. 1.39}}
        \end{tabular} &
        \begin{tabular}{c}
        \includegraphics[width=.32\textwidth]{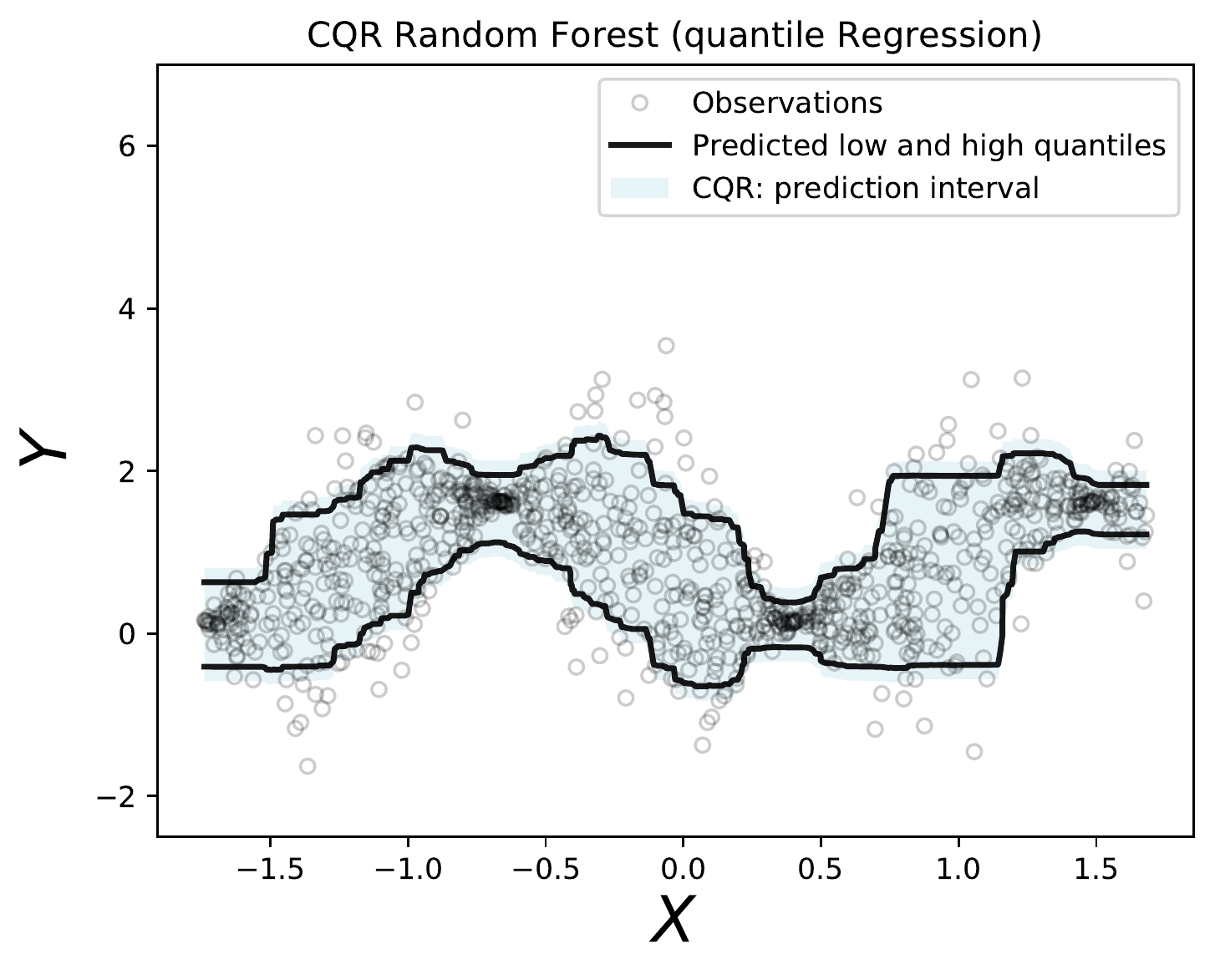}
        \\
        {\small{(c) Avg. cov. 89.84\%; Avg. len. 1.86}}
        \end{tabular} &
        \begin{tabular}{c}
        \includegraphics[width=.32\textwidth]{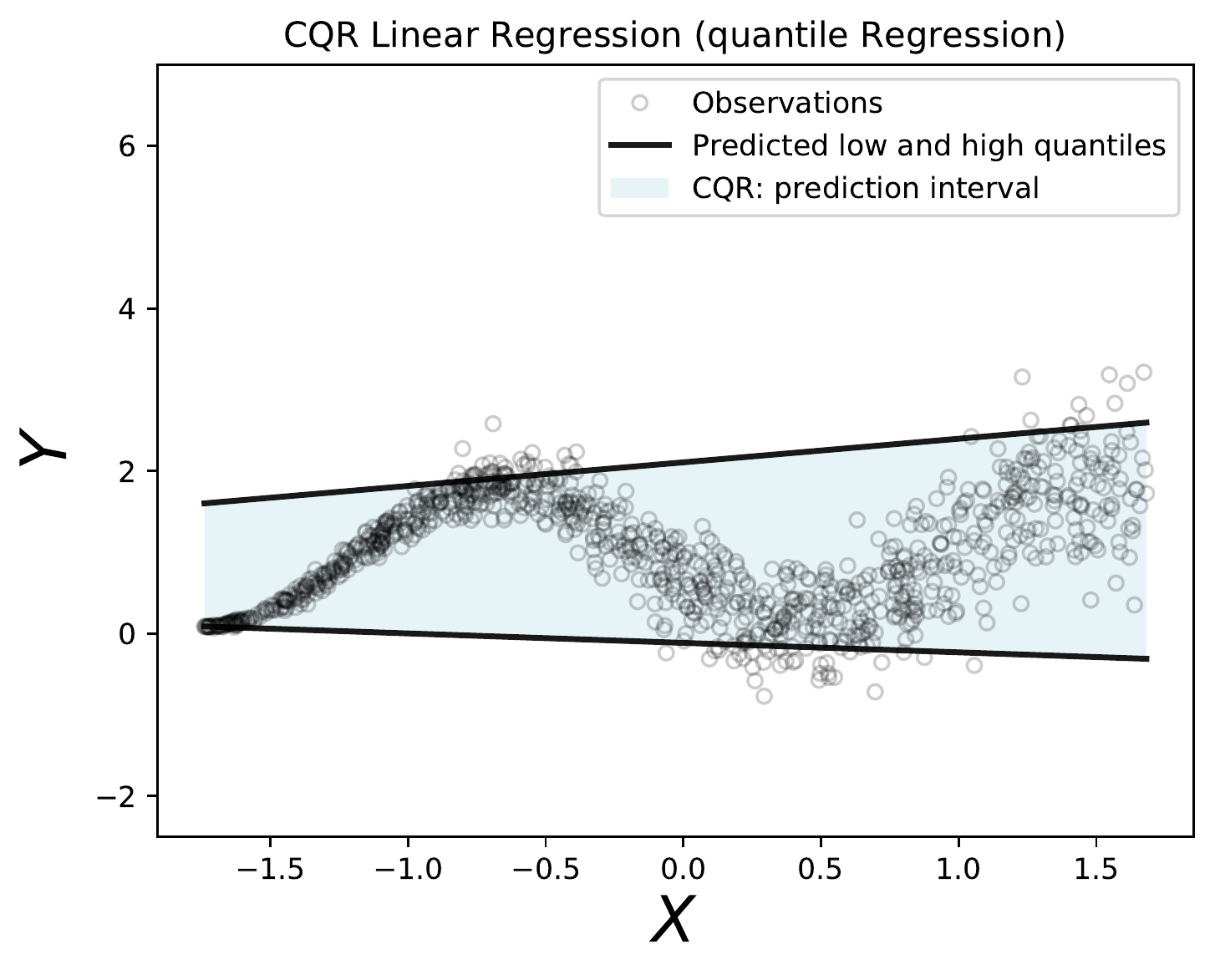}
        \\
        {\small{(e) Avg. cov. 89.48\%; Avg. len. 2.18}}
        \end{tabular} \\
        \begin{tabular}{c}
        \includegraphics[width=.32\textwidth]{fig/SLCP_mean.pdf} 
        \\
        {\small{(b) Avg. cov. 90.36\%; Avg. len. 1.27}}
        \end{tabular} &
        \begin{tabular}{c}
        \includegraphics[width=.32\textwidth]{fig/SLCP_RF.pdf}
        \\
        {\small{(d) Avg. cov. 90.06\%; Avg. len. 1.86}}
        \end{tabular} &
        \begin{tabular}{c}
        \includegraphics[width=.32\textwidth]{fig/SLCP_linear.pdf} 
        \\
        {\small{(f) Avg. cov. 90.52\%; Avg. len. 1.52}}
        \end{tabular} \\
        \end{tabular}
    \end{minipage}
    \vspace{-1.5ex}
    \caption{Comparison of conformal prediction methods on simulated heteroscedastic data with outliers. Figures (a) and (b) compare split conformal prediction and SLCP using a random forest mean estimator. Figures (c) and (d) show the coverage of CQR and SLCP using a random forest quantile estimator. Figures (e) and (f) show the difference between SLCP and CQR when the model is misspecified. The PIs constructed by SLCP attains better coverage and robust in different situations.}
    \label{fig:total_sim}
\end{figure*}

Here, we further demonstrate the advantage of SLCP using synthetic data. We compare SLCP with vanilla split conformal prediction, where a random forest mean estimator is trained; results are shown in figures \ref{fig:total_sim} (a) and (b). Although both methods satisfy the coverage guarantee, SLCP provides more adaptive intervals that result in a shorter average interval length. On the contrary, split conformal prediction generates a fixed length interval, which is too conservative at multiple places. Figures \ref{fig:total_sim} (c) and (d) compare SLCP with CQR: we train a quantile random forest model for each method using the same set of hyper-parameters. SLCP demonstrates almost the same results as CQR, given the quantile regression model has precisely captured the upper and lower levels of the region. However, the superiority of SLCP is more prominent if the quantile regression model is less well-specified. Figures (e) and (f) show results of the same method when replacing quantile random forest with quantile linear regression, i.e., the underlying model assumption is incorrect. PIs generated by CQR rely heavily on quantile regression models, while SLCP can still produce adaptive intervals by incorporating localized information into the nonconformity score.

Figure \ref{fig:slcp-knn_neural_net_sim} to figure \ref{fig:cqr_neural_net_sim} show simulated results of SLCP using different kernels: boxing kernel (KNN), RBF, and Epanechnikov (EPA) kernels. We finally adapt the RBF kernel in our experiments. 
In addition, table \ref{tab:star} to table \ref{tab:meps} show conditional coverage results of tabular datasets on each subgroup partitioned by categorical variables in the dataset. All methods are trained on the same neural network models as in the main paper. Results show that SLCP demonstrates better performance.

\begin{figure*}[t!]
    \begin{minipage}{\textwidth}
    \centering
    \begin{tabular}{c @{\hspace{-2.6ex}} c @{\hspace{-2.6ex}} c}
        \begin{tabular}{c}
        \includegraphics[width=.32\textwidth]{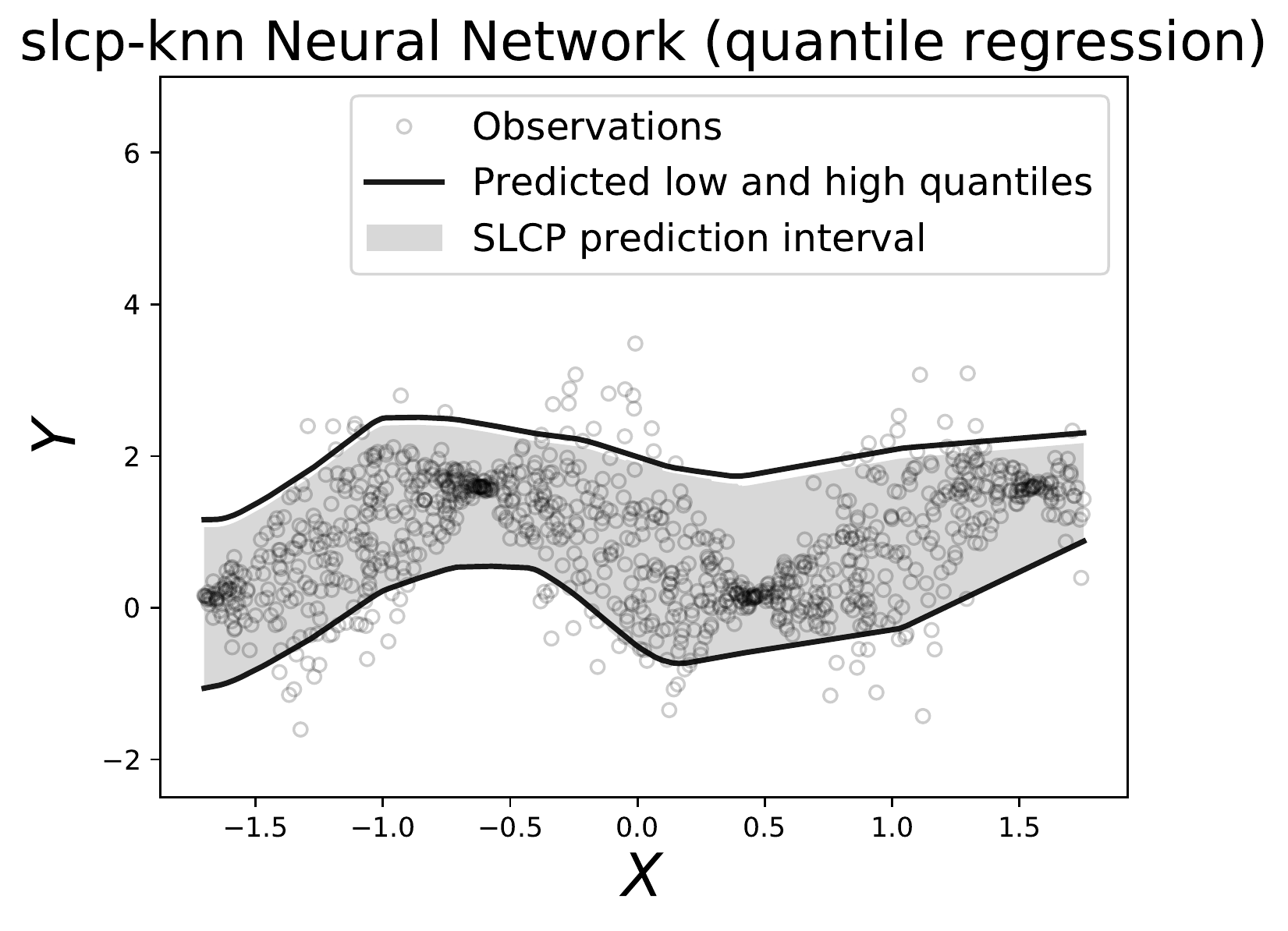}
        \\
        {\small{(a) Avg. cov. 89.46\%; Avg. len. 2.04}}
        \end{tabular} &
        \begin{tabular}{c}
        \includegraphics[width=.32\textwidth]{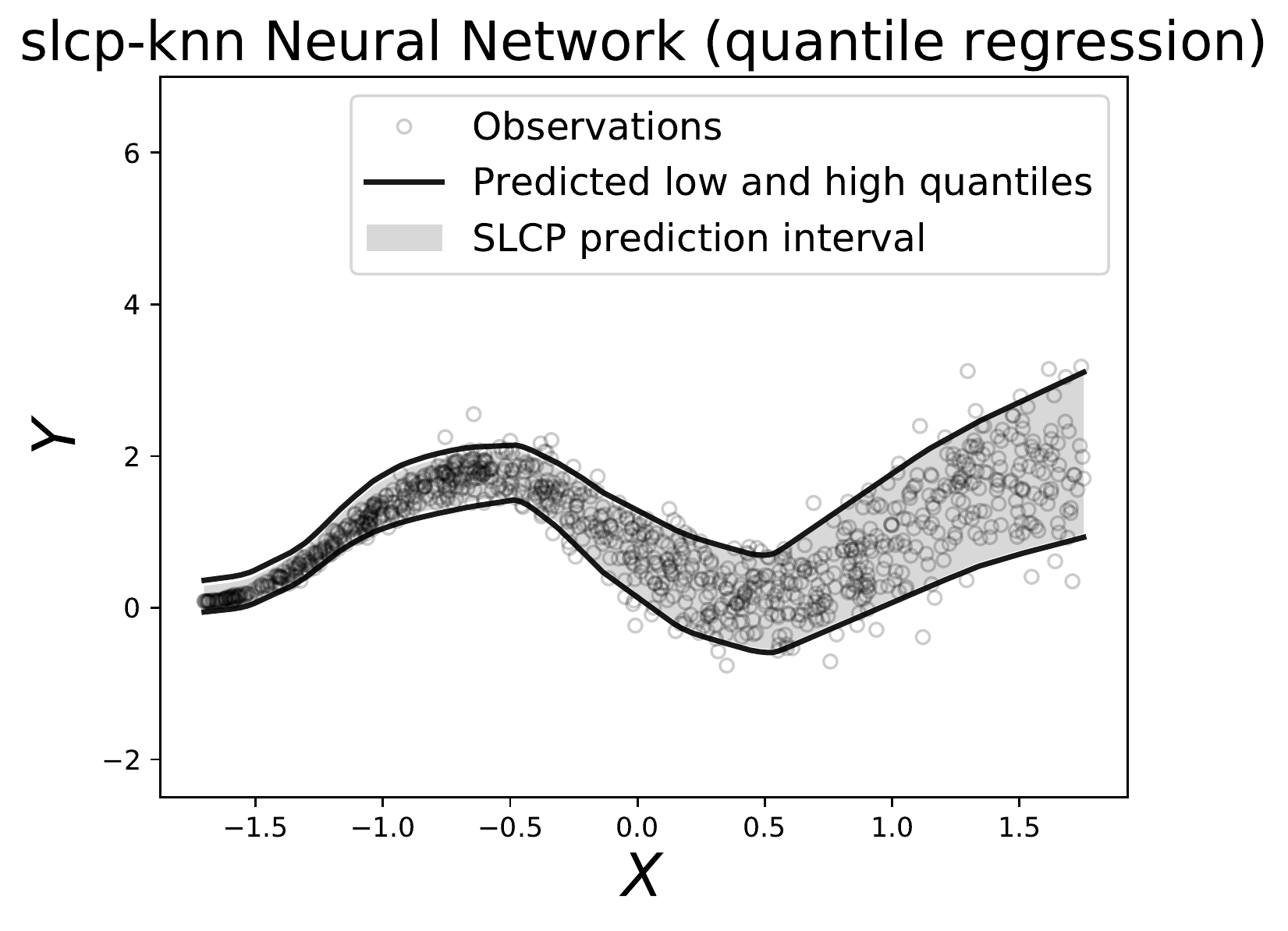}
        \\
        {\small{(b) Avg. cov. 89.32\%; Avg. len. 1.04}}
        \end{tabular} &
        \begin{tabular}{c}
        \includegraphics[width=.32\textwidth]{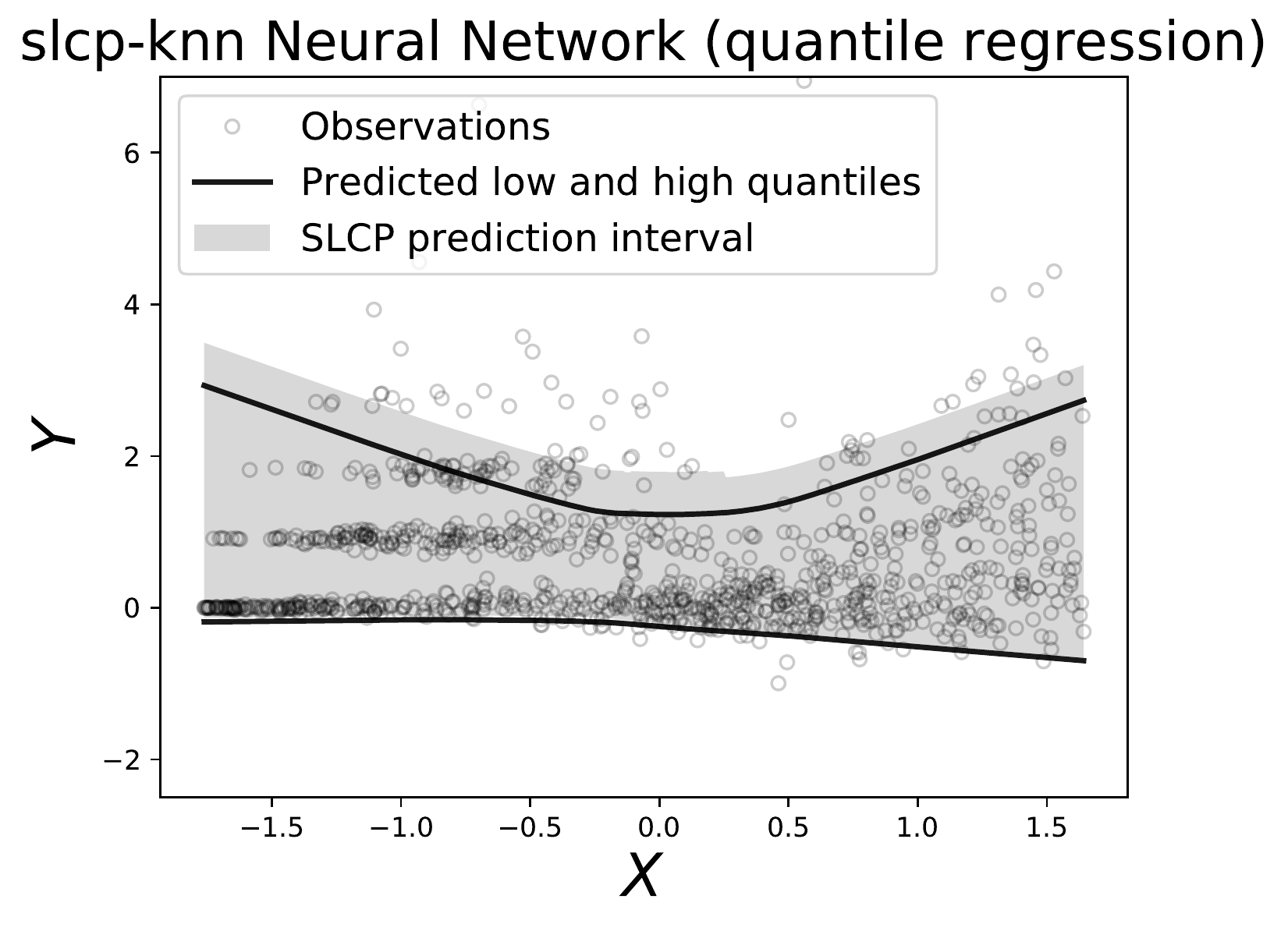}
        \\
        {\small{(c) Avg. cov. 90.70\%; Avg. len. 2.66}}
        \end{tabular} \\
        \end{tabular}
    \end{minipage}
    \vspace{-2ex}
    \caption{SLCP-knn on quantile neural network estimator.}
    \label{fig:slcp-knn_neural_net_sim}
\end{figure*}

\begin{figure*}[t!]
    \begin{minipage}{\textwidth}
    \centering
    \begin{tabular}{c @{\hspace{-2.6ex}} c @{\hspace{-2.6ex}} c}
        \begin{tabular}{c}
        \includegraphics[width=.32\textwidth]{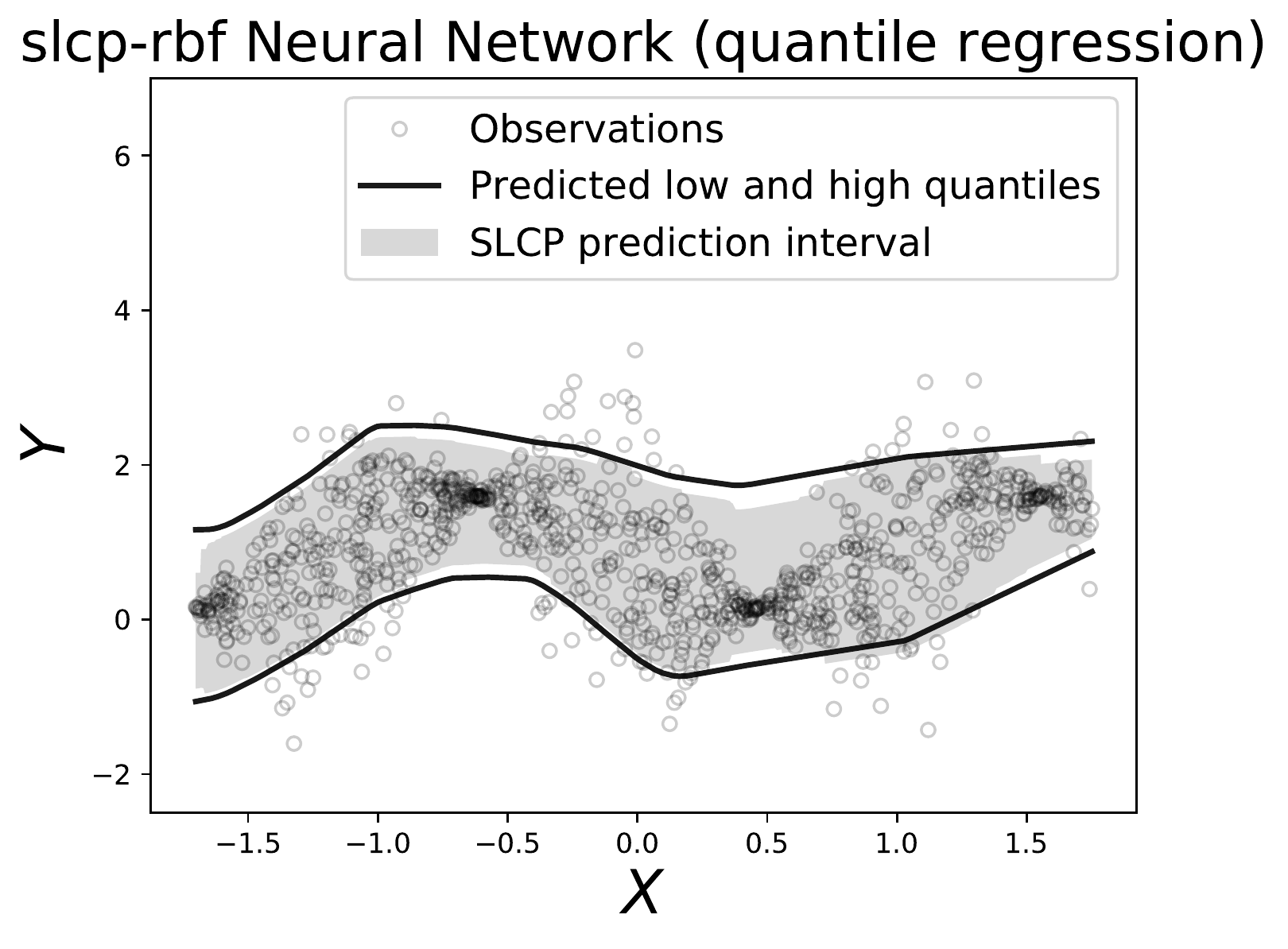}
        \\
        {\small{(a) Avg. cov. 88.96\%; Avg. len. 1.96}}
        \end{tabular} &
        \begin{tabular}{c}
        \includegraphics[width=.32\textwidth]{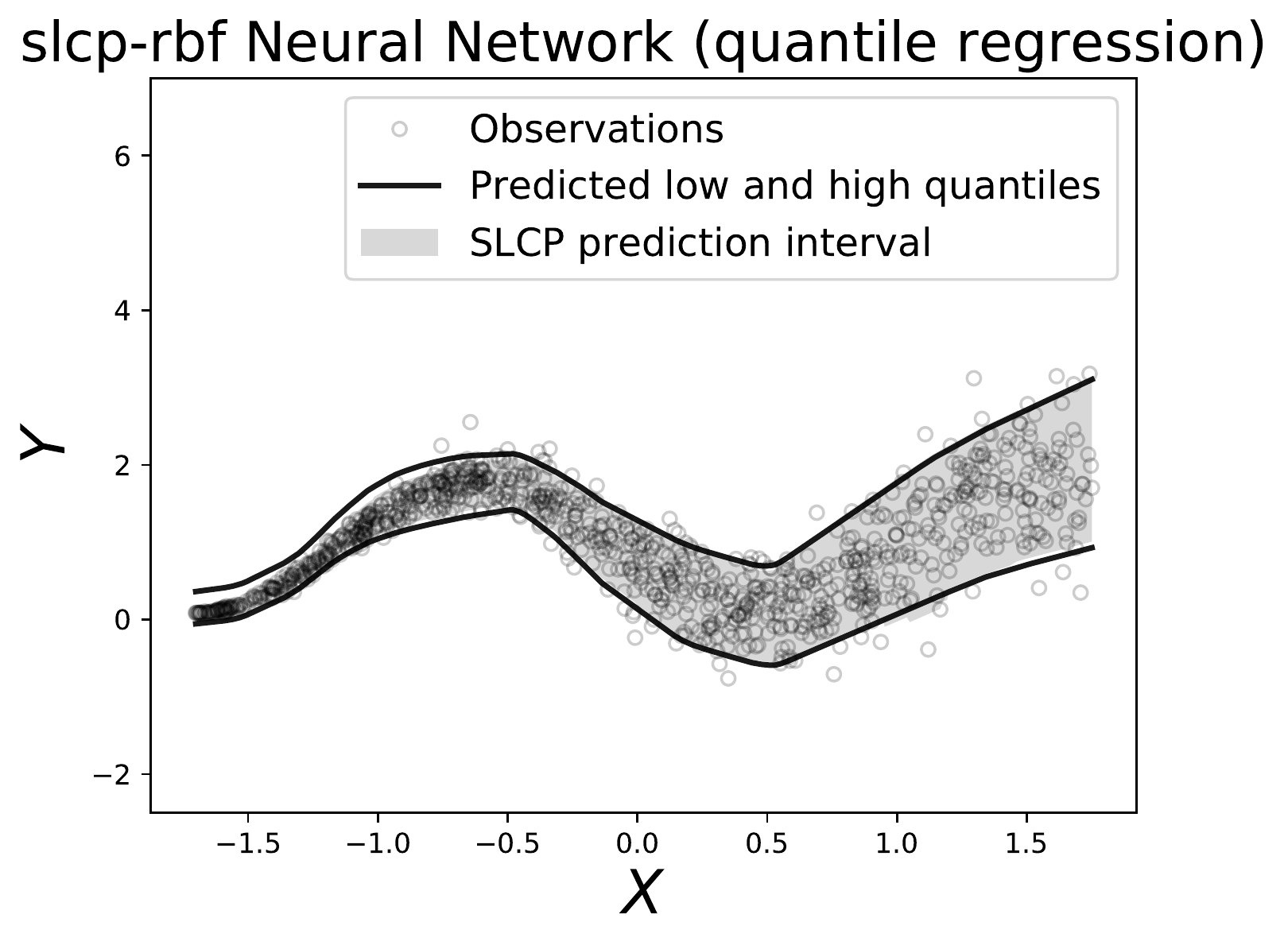}
        \\
        {\small{(b) Avg. cov. 89.00\%; Avg. len. 1.05}}
        \end{tabular} &
        \begin{tabular}{c}
        \includegraphics[width=.32\textwidth]{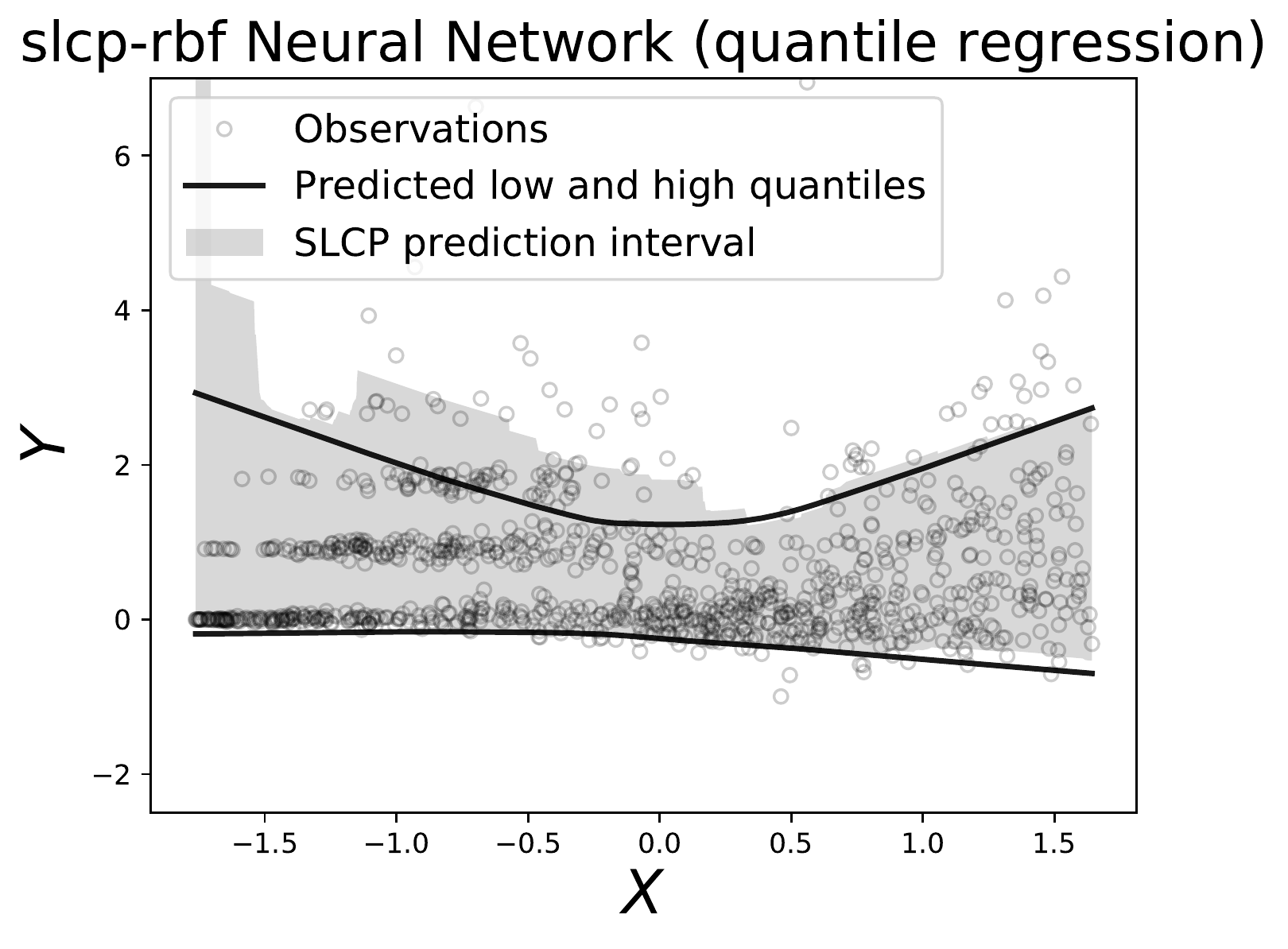}
        \\
        {\small{(c) Avg. cov. 90.44\%; Avg. len. 2.83}}
        \end{tabular} \\
        \end{tabular}
    \end{minipage}
    \vspace{-2ex}
    \caption{SLCP-EPA on quantile neural network estimator.}
    \label{fig:slcp-rbf_neural_net_sim}
\end{figure*}

\begin{figure*}[t!]
    \begin{minipage}{\textwidth}
    \centering
    \begin{tabular}{c @{\hspace{-2.6ex}} c @{\hspace{-2.6ex}} c}
        \begin{tabular}{c}
        \includegraphics[width=.32\textwidth]{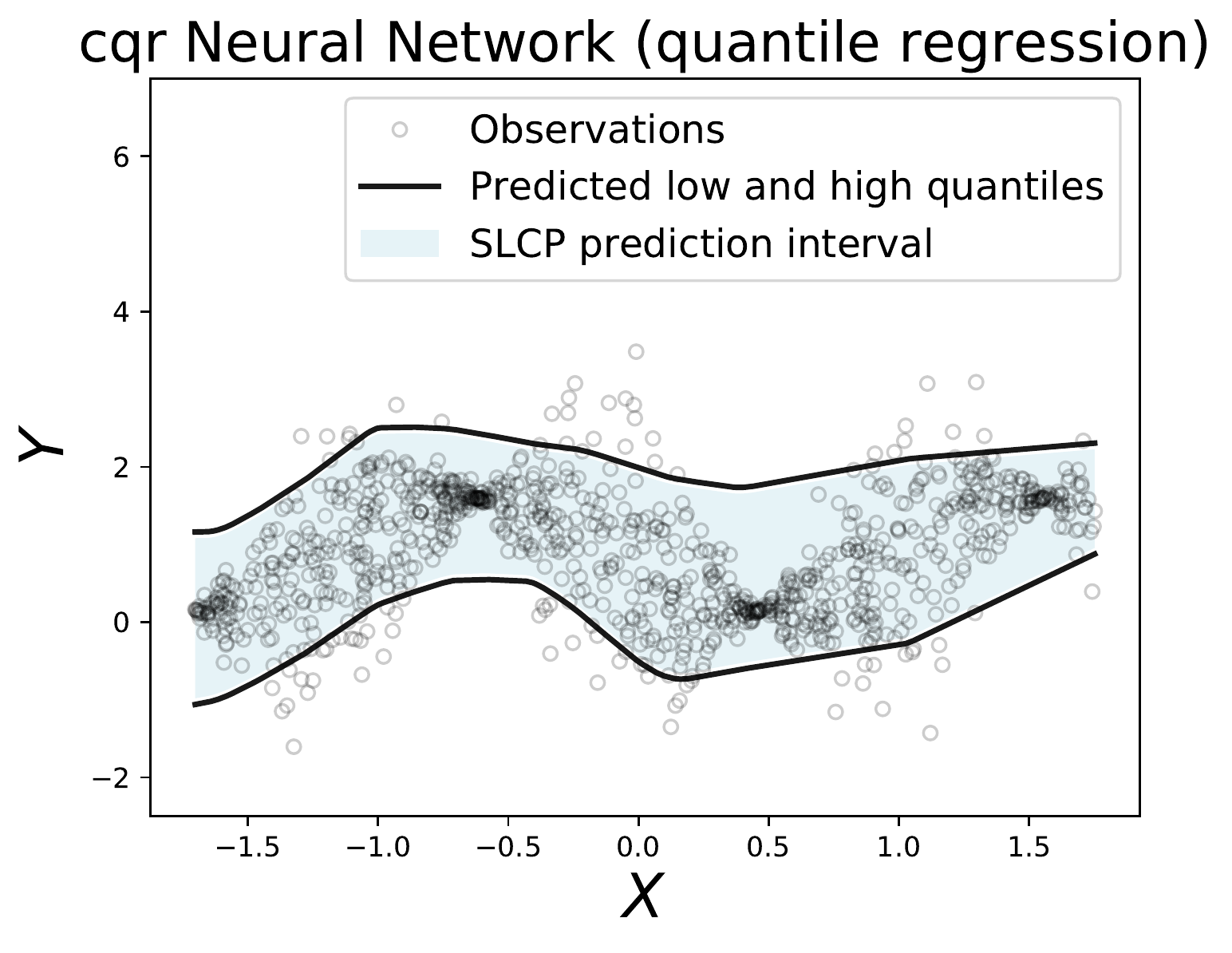}
        \\
        {\small{(a) Avg. cov. 88.88\%; Avg. len. 2.01}}
        \end{tabular} &
        \begin{tabular}{c}
        \includegraphics[width=.32\textwidth]{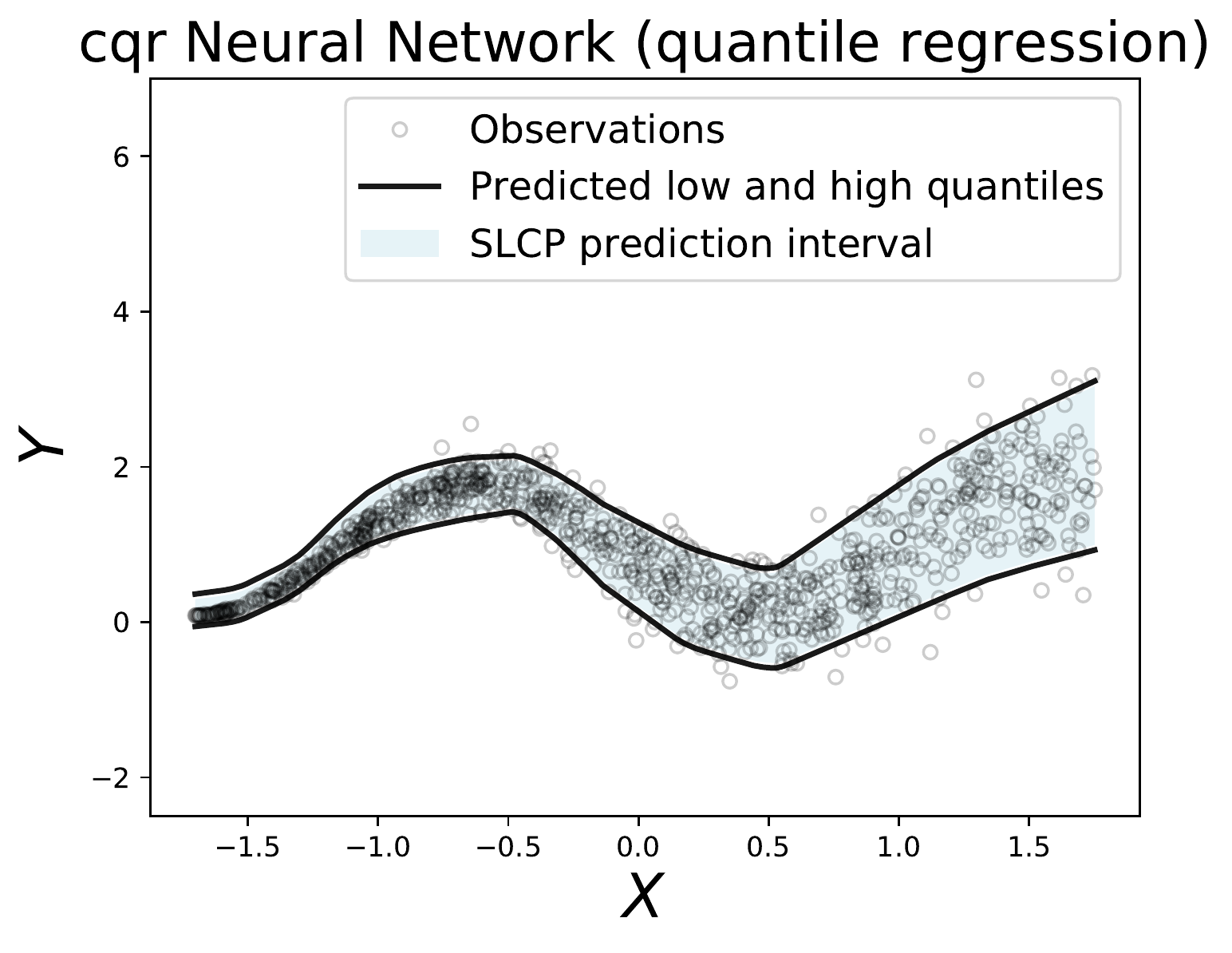}
        \\
        {\small{(b) Avg. cov. 88.80\%; Avg. len. 1.02}}
        \end{tabular} &
        \begin{tabular}{c}
        \includegraphics[width=.32\textwidth]{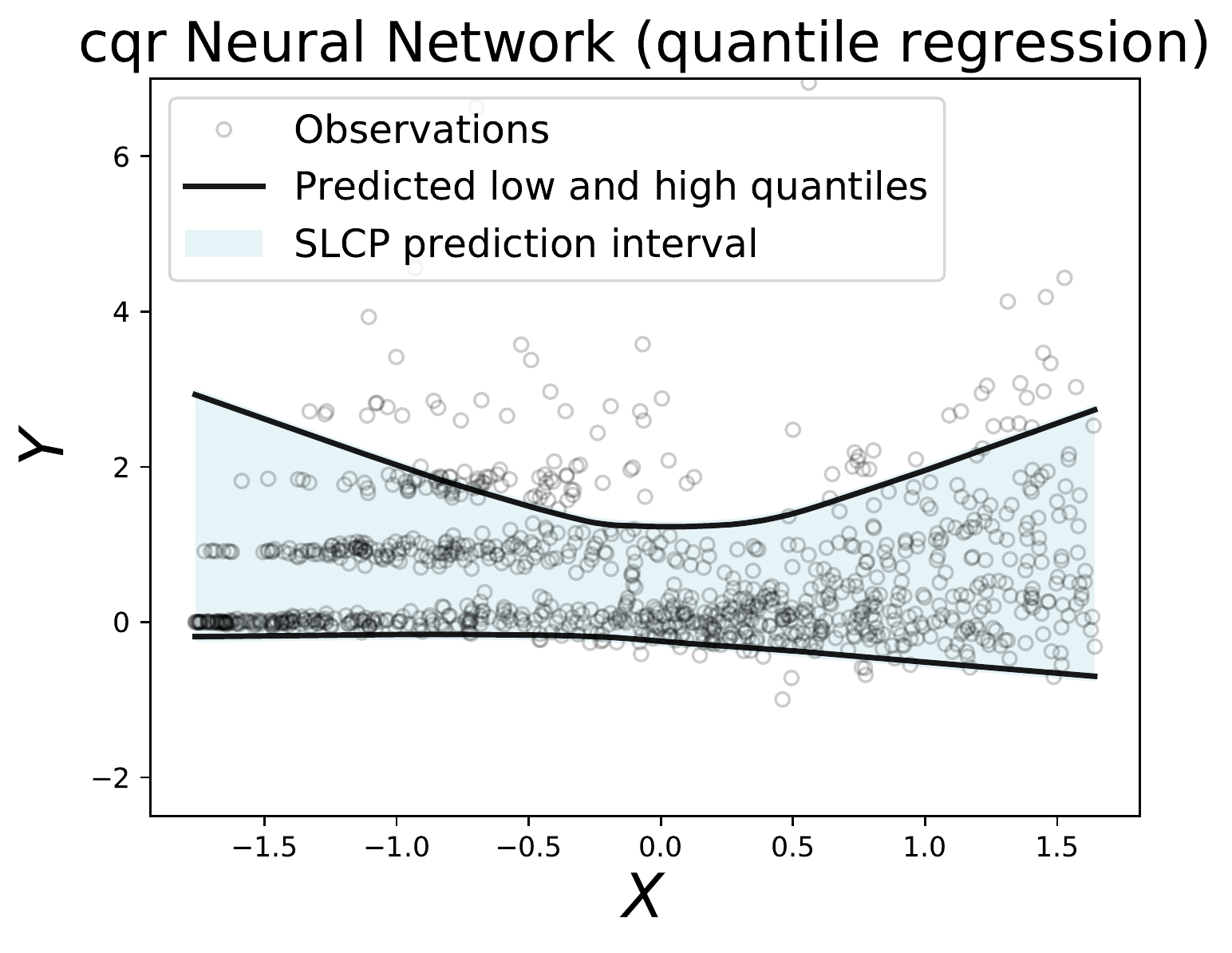}
        \\
        {\small{(c) Avg. cov. 90.34\%; Avg. len. 2.32}}
        \end{tabular} \\
        \end{tabular}
    \end{minipage}
    \vspace{-2ex}
    \caption{SLCP-RBF on quantile neural network estimator.}
    \label{fig:cqr_neural_net_sim}
\end{figure*}

\begin{table*}[t]
\centering
\renewcommand
\arraystretch{1.2}
\scalebox{0.95}{
\begin{tabular}{c|c|c|c|c|c|c}
\hlinewd{1.5pt}
& & SCLP & CQR & LVD & LCP & CD-Split \\ \hline
\multirow{2}{*}{ Overall } & Ave. Length & \textbf{0.20} & \textbf{0.20} & 0.28 & 0.24 & 0.22 \\
& Ave. Coverage & 90.03 & 89.48 & 90.36 & 89.45 & 90.91 \\ \hline
\multirow{2}{*}{ Caucasian (1653) } & Ave. Length & \textbf{0.22} & 0.23 & 0.27 & 0.24 & 0.26 \\
& Ave. Coverage & 89.75 & 89.6 & 90.02 & 89.63 & 90.3 \\ \hline
\multirow{2}{*}{ African American (498) } & Ave. Length & 0.16 & 0.18 & 0.31 & 0.23 & \textbf{0.15} \\
& Ave. Coverage & 91.12 & 90.65 & 91.41 & 89.37 & 92.46 \\ \hline
\multirow{2}{*}{ Asia (7) } & Ave. Length & \textbf{0.26} & \textbf{0.26} & 0.33 & 0.28 & \textbf{0.26} \\
& Ave. Coverage & \textcolor{red}{85.71} & \textcolor{red}{71.41} & 100 & 100 & \textcolor{red}{85.71} \\ \hline
\multirow{2}{*}{ Hispanic (2) } & Ave. Length & 0.31 & 0.31 & 0.35 & 0.31 & \textbf{0.26} \\
& Ave. Coverage & 100 & \textcolor{red}{0} & \textcolor{red}{50} & 100 & \textcolor{red}{0} \\ \hline
\multirow{2}{*}{ Other (1) } & Ave. Length & \textbf{0.23} & 0.24 & 0.27 & 0.24 & 0.26 \\
& Ave. Coverage & 100 & \textcolor{red}{0} & 100 & \textcolor{red}{0} & 100 \\
\hlinewd{1.5pt}
\end{tabular}
}
\caption{Conditional coverage results of STAR dataset; the target coverage rate is 90\%. Best results are highlighted in bold font and results failed to meet the desired coverage rate are highlighted in red.}
\label{tab:star}
\end{table*}

\begin{table*}[t]
\centering
\renewcommand
\arraystretch{1.2}
\scalebox{0.95}{
\begin{tabular}{c|c|c|c|c|c|c}
\hlinewd{1.5pt}
& & SCLP & CQR & LVD & LCP & CD-Split \\ \hline
\multirow{2}{*}{ Overall } & Ave. Length & \textbf{0.54} & 0.57 & 0.81 & 0.75 & 0.58 \\
& Ave. Coverage & 89.65 & 89.69 & 90.44 & \textcolor{red}{88.72} & 89.30 \\ \hline
\multirow{2}{*}{ Clear (7192) } & Ave. Length & \textbf{0.51} & 0.52 & 0.79 & 0.76 & 0.52 \\
& Ave. Coverage & 90.41 & 90.45 & 90.79 & \textcolor{red}{88.94} & 91.00 \\ \hline
\multirow{2}{*}{ Mist (2834) } & Ave. Length & \textbf{0.63} & 0.73 & 0.85 & 0.72 & 0.78 \\
& Ave. Coverage & \textcolor{red}{88.42} & 89.43 & 89.86 & \textcolor{red}{88.23} & \textcolor{red}{87.46} \\ \hline
\multirow{2}{*}{ Snow (859) } & Ave. Length & 0.59 & 0.58 & 0.84 & 0.79 & \textbf{0.54} \\
& Ave. Coverage & \textcolor{red}{84.29} & \textcolor{red}{83.89} & \textcolor{red}{86.04} & \textcolor{red}{82.41} & \textcolor{red}{80.55} \\
\hlinewd{1.5pt}
\end{tabular}}
\caption{Conditional coverage results of bike dataset; the target coverage rate is 90\%. Best results are highlighted in bold font and results failed to meet the desired coverage rate are highlighted in red.}
\label{tab:bike}
\end{table*}

\begin{table*}[t]
\centering
\renewcommand
\arraystretch{1.2}
\scalebox{0.95}{
\begin{tabular}{c|c|c|c|c|c|c}
\hlinewd{1.5pt}
& & SCLP & CQR & LVD & LCP & CD-Split \\ \hline
\multirow{2}{*}{ Overall } & Ave. Length & \textbf{2.33} & 2.39 & 3.41 & 2.98 & 2.37 \\
& Ave. Coverage & 90.23 & 90.64 & 91.45 & 89.58 & 90.10 \\ \hline
\multirow{2}{*}{ Region 1 (997) } & Ave. Length & \textbf{2.54} & 2.72 & 3.64 & 3.15 & 2.57 \\
& Ave. Coverage & 89.07 & 89.17 & 90.22 & \textcolor{red}{88.43} & \textcolor{red}{87.66} \\ \hline
\multirow{2}{*}{ Region 2 (1232) } & Ave. Length & \textbf{2.42} & 2.60 & 3.32 & 3.11 & 2.46 \\
& Ave. Coverage & \textcolor{red}{87.98} & \textcolor{red}{88.23} & 89.74 & \textcolor{red}{86.92} & \textcolor{red}{88.39} \\ \hline
\multirow{2}{*}{ Region 3 (2368) } & Ave. Length & 2.24 & \textbf{2.15} & 3.42 & 2.86 & 2.32 \\
& Ave. Coverage & 91.45 & 91.77 & 92.66 & 90.21 & 91.3 \\ \hline
\multirow{2}{*}{ Region 4 (1666) } & Ave. Length & \textbf{2.20} & 2.34 & 3.29 & 2.82 & 2.27 \\
& Ave. Coverage & 90.86 & 91.72 & 91.84 & 90.13 & 91.12 \\
\hlinewd{1.5pt}
\end{tabular}}
\caption{Conditional coverage results of meps-21 dataset; the target coverage rate is 90\%. Best results are highlighted in bold font and results failed to meet the desired coverage rate are highlighted in red.}
\label{tab:meps}
\end{table*}

\end{document}